%% Author_tex.tex
%% V1.0
%% 2011/14/12
%% developed by Techset
%%
%% This file describes the coding for imaiai.cls

\documentclass{article}

\usepackage{arxiv}

\usepackage{amsmath,amsfonts}% for dealing with mathematics,
\usepackage{amsthm}
\usepackage{graphicx}
\usepackage[utf8]{inputenc}
\usepackage{calrsfs}
\usepackage{enumitem}
\usepackage{tikz}
\newcommand*\circled[1]{\tikz[baseline=(char.base)]{
            \node[shape=circle,draw,inner sep=2pt] (char) {#1};}}
%for dealing with theorem environments,
%\usepackage{cite} %for dealing with citations
%\usepackage{hyperref} %for linking the cross references
%\usepackage{graphics} %for dealing with figures.
\usepackage{algorithm} %for describing algorithms
\usepackage{algpseudocode}
\usepackage{url} %It provides better support for handling and breaking URLs.
\usepackage{mathtools}

\def\modelset{{\mathfrak S _ \mathcal H }} 
 
\def\x{{\mathbf x}}
\def\s{{\mathbf s}}
\def\z{{\mathbf z}}
\def\y{{\mathbf y}}

\def\A{{\mathbf A}}
\def\Q{{\mathbf Q}}
\def\M{{\mathbf M}}

\def\R{{\mathbb R}}
\def\E{{\mathbb E}}

\DeclareMathOperator{\vc}{vec}
\DeclareMathOperator*{\argmin}{arg\,min}
\usepackage{xcolor}

\theoremstyle{definition}
\newtheorem{definition}{Definition}[section]
\newtheorem{corollary}{Corollary}[section]
\newtheorem{lemma}{Lemma}[section]
\newtheorem{remark}{Remark}

\newtheorem{theorem}{Theorem}[section]

\DeclareMathAlphabet{\pazocal}{OMS}{zplm}{m}{n}

%The author can find the documentation of the above style file and any additional
%supporting files if required from "http://www.ctan.org"

% *** Do not adjust lengths that control margins, column widths, etc. ***

\title{Compressive Independent Component Analysis: Theory and Algorithms}

\author{{
 Michael P. Sheehan and Mike E. Davies}\\
Institute of Digital Communications\\
University of Edinburgh\\
Edinburgh, UK\\
Corresponding author: michael.sheehan@ed.ac.uk}

\begin{document}
\maketitle
%\shorttitle{Compressive Independent Component Analysis} %%%for recto running head
%\shortauthorlist{Michael Sheehan and Mike Davies} %%% for verso running head

\begin{abstract}
{Compressive learning forms the exciting intersection between compressed sensing and statistical learning where one exploits forms of sparsity and structure to reduce the memory and/or computational complexity of the learning task. In this paper, we look at the independent component analysis (ICA) model through the compressive learning lens. In particular, we show that solutions to the cumulant based ICA model have particular structure that induces a low dimensional model set that resides in the cumulant tensor space. By showing a restricted isometry property holds for random cumulants  e.g. Gaussian ensembles, we prove the existence of a compressive ICA scheme. Thereafter, we propose two algorithms of the form of an iterative projection gradient (IPG) and an alternating steepest descent (ASD) algorithm for compressive ICA, where the order of compression asserted from the restricted isometry property is realised through empirical results. We provide analysis of the CICA algorithms including the effects of finite samples. The effects of compression are characterised by a trade-off between the sketch size and the statistical efficiency of the ICA estimates. By considering synthetic and real datasets, we show the substantial memory gains achieved over well-known ICA algorithms by using one of the proposed CICA algorithms. Finally, we conclude the paper with open problems including interesting challenges from the emerging field of compressive learning.}
\keywords{Independent Component Analysis \and Compressive Learning \and Sketching \and Compressive Sensing \and Summary Statistics \and Cumulants}

\end{abstract}

\section{Introduction}
In recent years, the size of datasets have grown exponentially as a result of advances in technology, signal acquisition, and the sophistication of modern day mobile phones and devices. This has enabled researchers, statisticians and machine learning practitioners to build increasingly accurate models as a consequence of larger sample sizes and feature dimensions. Nevertheless, this poses a fundamental challenge to large scale learning as (i) traditional algorithms have computational complexity that scales with the order of the dataset dimensions (ii) the whole dataset has to be stored or transferred on to local RAM as optimisation methods need to return to the data (or a random subset of the data) at subsequent iterations, and (iii) one is vulnerable to malicious attacks of potentially sensitive and personal information as the data needs to be stored or transferred locally. Compressive learning (CL) \cite{gribonval2021compressive,gribonval2021sketching} partially addresses these fundamental challenges by severely compressing the whole dataset into a random representation of fixed  size, named a so-called sketch, in a single (or limited) pass of the data prior to learning. Once the sketch is formed, the parameters of the model are inferred solely from the sketch, hence a CL algorithm, for a given task or model, needs never to return to the original dataset, and it can be deleted from memory as a result. At the core of the CL framework \cite{gribonval2021compressive,keriven2018sketching}, is that in general, the size of the sketch does not scale with the dimensions of the dataset, or indeed the data's underlying dimensionality, but instead is driven by the complexity or dimensionality of the task or model of interest. In theory, one can work with datasets of arbitrary length, as the dimension of the sketch is fixed constant throughout, making CL especially amenable to large scale learning. Inferring the parameters of a model solely from the sketch is an under determined inverse problem. As a result, we need regularity assumptions to make the problem well-posed. These assumptions come in the form of a low dimensional model set that the solution to the inference problem lies on or close to. The reader may notice this is reminiscent of compressive sensing where one assumes the signal of interest is $k$ sparse in some domain, and therefore the solution lies on or close to the union of $k$ dimensional subspaces representing a low dimensional model set. The sparse regularity assumption allows one to take a limited number of measurements to recover the signal of interest and reduce the complexity and cost of acquisition. In later sections, we take inspiration from compressive sensing to develop and analyse our CL algorithms.
\par In this paper, we develop a CL framework, including theory and practical algorithms, for independent component analysis (ICA). ICA is an unsupervised learning task that attempts to find the linear transformation that separates some given data into components of maximal independence. It is used extensively in the machine learning and signal processing communities for example as a dimensionality reduction tool \cite{hyvarinen2000independent}, to uncover underlying factors that effect the price movements of a collection of stocks \cite{882456} and to detect independent sources in the brain through EEG signals \cite{841330}. As will be discussed in section \ref{sec: background}, the ICA problem can be solved directly from the data or through some higher order statistics of the data, such as the kurtosis. Given that the number of independent sources is denoted by $n$ and the signal or data length is denoted by $N$, then the memory complexity typically scales either with $\mathcal{O}(nN+n^2)$ or $\mathcal{O}(n^4)$ depending on the method of choice. As one can see, this becomes infeasible for large scale datasets. In this paper, we show theoretically and empirically that it is possible to design a CL ICA algorithm where the sketch dimension, and therefore the memory complexity, scales at $\mathcal{O}(n^2)$ which can be orders of magnitudes smaller than current approaches.

\subsection{Contributions and Outline}
Below, we highlight the main contributions of the paper:
\begin{itemize}
    \item Focusing on the higher order statistics method of ICA, we show that an independent component model set exists in the space of $4^{\text{th}}$ order cumulant tensors and we state the set's dimensionality.
    \item We prove that the well-known restricted isometry property (RIP) holds for projections of the cumulant tensor on the model set with a sketch size  $m=\mathcal{O}(n^2)$, which is order optimal to the dimensional of the model set. Given a sketch constructed from random Gaussian ensembles, we show that the reconstruction error is stable when measuring a signal which lives close to, but not on, the ICA model set through the existence of an instance optimal decoder.
   \item Two inherently different CICA algorithms are proposed, in the form of an alternating steepest descent and an iterative projection gradient algorithm and show that they successfully recover the ICA mixing matrix with overwhelming probability provided that the sketch size $m\geq 2n(n+1)$. We analyse the proposed CICA algorithms on synthetic and real datasets showing that the CICA scheme achieves substantial memory savings over existing ICA methods, whilst retaining competitive estimation accuracy.
   \item We analyse the tradeoff between the compression rate of the sketch and the overall statistical efficiency of the sketch estimate. 
\end{itemize}
\subsection{Related Works}
\subsubsection{Existing Compressive Learning Models}
The framework of CL has been successfully applied to a host of learning tasks and models with the desired outcome of reducing the complexities associated with signal acquisition, computation and memory storage. In \cite{keriven2018sketching}, Keriven \textit{et al.} proposed a CL framework for mixture models, in particular the mixture of Gaussian distributions and $k$-means learning tasks. In both cases, a sketch is constructed by randomly sampling the characteristic function of the mixture model which can be equivalently seen as taking random Fourier features of the data \cite{rahimi2008random}. The compact representational sketch of each mixture model scales as $\mathcal{O}(k^2d)$, where $k$ is the number of mixtures in the model and $d$ is the feature space dimensions of the data. As a result, a compressive mixture model algorithm was proposed that had both computational and space complexities that scaled independently of the number of data points $N$. In \cite{gribonval2021compressive}, a compressive principal component analysis (PCA) framework was proposed. As will be discussed in Section \ref{subsec: Compressive PCA}, the compressive PCA methodology is aligned closely to our compressive ICA framework. Distinct from compressive mixture models, the compressive PCA method is left distribution free and it is assumed that the data lives on, or can be approximately modelled, by a $k$-dimensional subspace. As a result, a sketch of size $\mathcal{O}(kd)$ can be computed by taking a random projection of the covariance matrix $\Sigma\in\R^{d\times d}$ of the data, hence reducing the memory complexities of storing either the data of size $Nd$ or the covariance matrix of size $d^2$. 
\subsubsection{Generalised Method of Moments}
Compressive learning is similar to the technique of Generalised Method of Moments (GeMM) \cite{hansenGeMM,gemmhall} where the parameters of interest $\theta$ are estimated by matching a collection of generalised moments of the distribution with the empirical counterparts calculated through the data. In most cases it is used instead of maximum likelihood estimation when calculating the likelihood is not tractable. CL differs from much of the GeMM literature as the goal is fundamentally different: in compressive learning one attempts to construct a compact representation of data with the aim of reducing complexity constraints (computation, memory, acquisition) whilst in GeMM the goal is to primarily estimate $\theta$ when the model is either partially specified or the likelihood does not have a closed form solution. Moreover, the selected generalised moments may be a function of the parameter being estimated, hence not providing a one off sketch. 
\subsubsection{Streaming Methods}
Closely related to CL is the collection of streaming methods \cite{cormode2012synopses,tropp2019streamingScientific}, where data items are seen and queried only once by the user and then discarded. This is of particular interest when the summary statistic of choice is updated and maintained in real time, for example in the online learning setting \cite{1198387}, to reduce space complexities. Notably, the count-min-sketch \cite{cormodea2005improved} was developed to query data in an online fashion with the application of maintaining histograms of quantiles. However, these methods in general focus on the discrete collection of objects and database queries while in CL the framework and method is applied to machine learning tasks where typically the signal is question is continuous. Tropp \textit{et al.} \cite{tropp2017practical} proposed a streaming framework for large scale PCA. In particular, in \cite{tropp2019streamingScientific}, the authors design random sketches for on-the-fly compression of data matrices associated with large scale scientific simulations. Here the data matrix $\mathbf{A}$ of interest can be decomposed into a sequence
\begin{equation}
    \mathbf{A}=\mathbf{H}_1+ \mathbf{H}_2+\mathbf{H}_3+\dots 
\end{equation}
where it is assumed each $\mathbf{H}_i$ has some structural redundancies for example sparsity or low-rank. These methods have a subtle yet fundamental difference from CL, as in CL the structural assumptions which are exploited to form the CL sketch arise from the model or distribution itself, while in these streaming methods the structural assumptions come directly from the data. Moreover, several passes of the data may be required to reduce the low-rank approximation error \cite{tropp2017practical}. 
\subsubsection{Other Compression Techniques}
Coresets are a popular method used to compress a database into a summary statistic used for inferring the parameters of a given model and has been used primarily for subspace clustering based tasks \cite{har2004coresets,feldman2020turning}. In a similar vein to CL, the compact data representation has size that typically scales independently to the number of input items and the dimension of input feature space. However, the coresets are constructed in a hierarchical manner, possibly resulting in multiple passes of the data and are therefore not naturally amenable to online or distributed learning. Projections that include both random projections and feature selections \cite{boutsidis2010random,calderbank2009compressed} are used widely to reduce the dimensionality of the data. In \cite{calderbank2009compressed}, datasets were randomly projected into a compressed domain using both random Gaussian and Bernoulli matrices. In a similar vein to compressive sensing \cite{Donoho_CS}, the data was assumed to be $k$-sparse therefore the dependency of the feature space dimension $d$ was removed within the space and acquisition complexities. In contrast to random projections, more structural based projections are proposed. In \cite{tang2014feature}, different feature selection techniques for classification are reviewed including structured graph methods and the use of embedded models. The well-known PCA method is a popular preprocessing technique that projects the dataset onto a $k$-dimensional subspace of maximal variance \cite{jolliffe2016principal}. In both random and structured projections, the methods discussed only tackle the dependency of the feature space dimension $d$ and do not address the challenges posed by a large data size $N$. Sub-sampling methods are also a popular method for dimensionality reduction whereby a subset of the original dataset is used for learning. As discussed previously, the method of coresets \cite{har2004coresets,feldman2020turning} is a sub-sampling technique that attempts to sub-select dominant items that well approximate the structure of the dataset. Other sub-sampling techniques include random and adaptive sub-sampling \cite{cormode2012synopses}. The disadvantage of sub-sampling techniques is that there is a risk of discarding important information relating to non-sampled data items. Moreover, these techniques only tackle the constraint on the number of data items $N$ and don't combat the complexity issues posed by the feature space dimensional $d$. 

\par Specifically to ICA compression, Sela \textit{et al.} \cite{sela2016randomized} used kernel approximation techniques to reduce the dimensions of the Kernel ICA method proposed by Bach \cite{bach2002kernel}. Random Fourier features are used to approximate the kernel, reducing the memory complexity from $\mathcal{O}(d^2N^2)$ to $\mathcal{O}(MN)$, where $M$ is the number of random Fourier weights used. Despite the reduction in memory complexity, the algorithm still has storage demands which scale linearly with $N$. In comparison, we remove the dependency of the data length $N$ completely, within our framework, when estimating the ICA mixing matrix.

\section{Background}\label{sec: background}
\subsection{Compressive Learning}\label{subsec: Compressive learning}
Let $\mathbf{x}_1,\mathbf{x}_2,\dots,\mathbf{x}_N$ be independent and identically distributed samples from an unknown probability distribution $\pi$ on $(\pazocal{X},\pazocal{B})$ where $\pazocal{X} \subset\R^d$ is some Euclidean space and $\pazocal{B}$ is a Borel $\sigma$-field. Classically, $\pi$ is parametrized by some parameters denoted by $\theta\in\Theta (\in\R^k)$. A statistical learning problem can be formalised as follows: find a hypothesis $h^*$ from a hypothesis class $\mathcal{H}$ that best matches the probability distribution $\pi$ over the training collection $\{\mathbf{x}_i\}_{i=1}^N$, given some data fidelity term. Given a loss function $l:\pazocal{X}\times\mathcal{H}\longmapsto\R$, this is equivalent to minimizing the risk defined as
\begin{equation}
\label{Eqn: risk intro}
    h^* = \argmin_{h\in\mathcal{H}}\mathcal{R}(\pi,h)=\argmin_{h\in\mathcal{H}}\E_{\mathbf{x}\sim\pi}l(\mathbf{x},h).
\end{equation}
\noindent
Formally, the model set associated to the hypothesis class can be defined as:
\noindent
\begin{equation}
\label{eqn: generic model set }
    \mathfrak{S}_\mathcal{H}:=\{\pi\in\mathcal{P}(\pazocal{X}):\exists h \in \mathcal{H},\, \mathcal{R}(\pi,h)=0\}.
\end{equation}
\noindent In other words, the set containing all distributions for which zero risk is achievable. As a result, the model set has a dimension which is intrinsic to the hypothesis class of the model. In practice, one cannot minimize the true risk as we generally do not have access to the true distribution $\pi$, so instead, one can minimize the empirical risk with respect to the finite samples of the true distribution and as a result this may mean all the data is required to be stored in memory. 
\par In CL \cite{keriven2018sketching,gribonval2021compressive,gribonval2021sketching}, we find a compact representation, or a so-called sketch, that encodes some statistical properties of the data. Its size is ideally chosen relative to the intrinsic complexity of the problem, making it possible to work with arbitrarily large datasets while storing in memory an object of fixed size. Given a feature function $\Phi:\pazocal{X}\longmapsto\mathbb{C}^m$, such that $\Phi$ is integrable with respect to any $\pi\in\mathcal{P}(\pazocal{X})$, define a linear operator $\mathcal{A}:\mathcal{P}(\pazocal{X})\longmapsto\R^m$ by
\noindent
\begin{equation}
\label{Eqn: sketch}
    \mathcal{A}(\pi):=\E_{\mathbf{x}\sim\pi}\Phi(\mathbf{x}).
\end{equation}
\noindent
The sketch defined in (\ref{Eqn: sketch}) can be seen as taking the expectation of some particular features of the distribution $\pi$, which is similar to the field of kernel mean embedding \cite{muandet2017kernel} where one uses feature maps to embed probability distributions. Therefore, we would like to construct $\mathcal{A}$ so that $\mathcal{A}(\pi)$ captures sufficiently relevant information of the data to allow us to infer the parameters of the model directly from the sketch. As a trivial example, if we seek to infer only the mean of a normal distribution $\pi=\mathcal{N}(\mu,\sigma)$, the construction $\mathcal{A}(\pi)$ where $\Phi(\mathbf{x})=\mathbf{x}$ would constitute a trivial yet sufficient sketch. In reality, CL is applicable to much more complex models where the feature function is non-trivial and the model may not necessarily possess a finite dimensional sufficient statistic independent from the data. The goal of CL is to therefore construct a sketch of size  $m\ll Nd$ that captures enough information to recover an estimated risk which is \textit{close} to the true risk with high probability \cite{gribonval2021compressive}. In practice, as in the kernel mean embedding literature \cite{muandet2017kernel}, the empirical distribution is used to form an empirical sketch defined as 
\noindent
\begin{equation}
\label{eqn: empirical sketch}
    \hat{\mathbf{y}}=\mathcal{A}(\pi_N) \quad \text{where } \quad \pi_N:=\frac{1}{N}\sum_{i=1}^N\delta_{\mathbf{x}_i}
\end{equation}
\noindent
denoting by $\delta_{x}$ the Dirac distribution on $x$, and therefore the empirical sketch can be formed directly from the data. Due to the law of large numbers,  $\lim_{N\rightarrow\infty}\mathcal{A}(\pi_N)=\mathcal{A}(\pi)$. Once the sketch has been computed, one can discard the dataset $\{\mathbf{x}_i\}_{i=1}^N$ from memory. As a result, CL reduces down to solving an inverse problem of the form 
\begin{equation}
    \label{eqn: linear inverse prob background}
    \hat{\theta}=\argmin_{\theta\in\Theta} C(\theta\mid \hat{\y})
\end{equation}
where $C(\cdot\mid\hat{\y})$ is a cost function designed for the specific learning task at hand. In a compressive sensing light, we can exploit structural assumptions of the model set and the associated parameter space $\Theta$, e.g sparsity, low rankness, low dimensional manifold properties, to make (\ref{eqn: linear inverse prob background}) well-posed and finding a solution tractable. As such, one can design a decoder $\Delta$ that exploits the structural assumptions of the model set $\mathfrak{S}_\mathcal{H}$ to recover the parameters of the model from the sketch whilst minimizing the risk. The sketching operator $\mathcal{A}$ and the decoder $\Delta$ form the pair $(\Delta,\mathcal{A})$ that define the CL algorithm for a specific learning problem. It should be noted that minimizing (\ref{eqn: linear inverse prob background}) plays the role of a proxy for minimizing  the empirical risk as, by definition of the model set in (\ref{eqn: generic model set }), any $\pi\in\mathfrak{S}_\mathcal{H}$ has zero loss in expectation \cite{gribonval2021compressive}. 

\begin{table}
\centering
\resizebox{\columnwidth}{!}{\begin{tabular}{|l|l|l|l|}
\hline
\textbf{Learning Task }                              & $k$ - \textbf{Means }  & \textbf{GMM} & \textbf{PCA}  \\
\hline
Model set $\mathfrak{S}_h$  & $\left\{\pi\mid \text{mix. of } k \text{ Diracs}\right\}$ & $\left\{\pi\mid \text{mix. of } k \text{ Gaussians}\right\}$     & $\left\{\pi\mid\text{rank}(\Sigma_\pi)\leq k\right\}$     \\
Feature func. $\Phi(\x)$ &   $\left(e^{{\rm i}\omega_j^T\x}/w(\omega_j)\right)_{j=1}^m$ &        $\left(e^{{\rm i}\omega_j^T\x}\right)_{j=1}^m$      &     $(\langle\mathbf{A}_j,\x\x^T\rangle)_{j=1}^m$    \\
Sketch cost $C(\theta,\y)$ &  $\min_{\pi\in\mathfrak{S}_\mathcal{H}}\lVert \y-\mathcal{A}(\pi)\rVert_2$         & $\min_{\pi\in\mathfrak{S}_\mathcal{H}}\lVert \y-\mathcal{A}(\pi)\rVert_2$    & $\min\lVert\Sigma_\pi\rVert_*$ s.t $\mathcal{A}(\Sigma_\pi)=\y$      \\
Sketch Size  $m$                               &   $\mathcal{O}(k^2d)$              &  $\mathcal{O}(k^2d)$   & $\mathcal{O}(kd)$  \\
\hline
\end{tabular}}
\caption{Summary of existing methods in the CL framework. For more details see \cite{gribonval2021compressive}. }
\label{Table: CL summary}
\end{table}

\subsection{Compressive Principal Component Analysis}\label{subsec: Compressive PCA}
In Section \ref{subsec: Compressive learning}, the framework of CL was discussed in a general manner without specific consideration of the distributional form of the model. As will be discussed in Section \ref{subsec: ICA}, the PCA and ICA models are similar in nature in that the model is often left distribution free. In other words, the distribution of the sampled data is left unspecified. In Table \ref{Table: CL summary}, it is shown that the compressive PCA model set \cite{gribonval2021compressive}  is defined as
\begin{equation}
    \mathfrak{S}_\mathcal{H}= \left\{\pi\mid\text{rank}(\Sigma_\pi)\leq k\right\}.
\end{equation}
Due to the distribution free assumption of the PCA model, we seek structural assumptions that are manifested within some \textit{intermediary} statistic space $\mathbb{S}$ to make computing a sketch possible \cite{sheehan2019compressive}. In the case of compressive PCA, the space of $d\times d$ covariance matrices is leveraged as an intermediary statistic space $\mathbb{S}$ where the rank of the covariance matrices is exploited. Figure \ref{Fig: CL diagram} depicts a geometric viewpoint of both compressive parametric learning (e.g. $k$-means, GMM) and distribution free compressive learning (e.g. PCA, ICA). In general, distribution free CL poses distinct challenges and advantages from the typical parametric CL framework \cite{sheehan2019compressiveSPM}. Challenges arise when choosing an intermediary statistic space $\mathbb{S}$, for instance (1) what set of intermediate statistics can we use? (2) How do the structural assumptions of the model set manifest within the intermediate statistic? Equivalently, there are many advantages. Specifically, by leveraging some set of intermediate statistics we have implicitly mapped the problem from an infinite dimensional probability space to a typically finite dimensional statistic space. As a result, we can utilise a host of existing techniques within the compressive sensing literature to design encoder and decoder pairs $(\mathcal{A},\Delta)$. Moreover, it also allows us to use a more flexible semi-parametric model that is only partially specified. As will be discussed in Section \ref{sec: CICA Theory}, the compressive ICA framework follows a similar convention where the space of 4th order cumulant tensors $\mathbb{S}=\mathfrak{C}$ is used as an intermediary statistic space to exploit structural assumptions of the model set $\mathfrak{S}_\mathcal{H}$ to form a sketch.

\begin{figure}[ht!]
\centering
\includegraphics[width=5.5in]{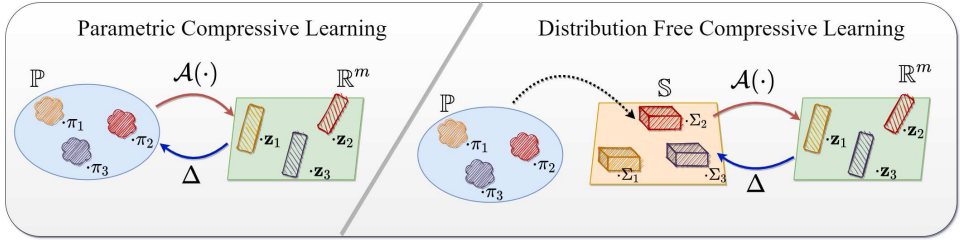}
\caption{A schematic diagram of parametric compressive learning (Top) and distribution free compressive learning (Bottom).  }
\label{Fig: CL diagram}
\end{figure}

\subsection{Independent Component Analysis}\label{subsec: ICA}
 ICA is used frequently in the machine learning  and signal processing communities to identify latent variables that are mutually independent to one another. Consider a data vector $\x=(x_1,x_2,\dots,x_d)^T$, then the problem of ICA concerns finding a mixing matrix $\M\in\R^{d\times n}$ (here we assume that $d\geq n$) such that
 \begin{equation}
     \label{eqn: ICA vector set up}
     \x=\M\s,
 \end{equation}
 where $\s=(s_1,s_2,\dots,s_n)^T$ and the components $s_i$ are statistically independent:
\begin{align}
\label{eqn: statistical independence}
    p(s_1,s_2,\dots,s_n)=\prod^n_{i=1}p_i(s_i).
\end{align}
The data point $\x$ is only one realisation of a data matrix or signal $\mathbf{X}\in\R^{N\times d}$ of length $N$, so therefore we attempt to infer $\M$ with the collective set of linear equations $\mathbf{X}=\M\mathbf{S}$, where $\s$ is a realisation of $\mathbf{S}\in\R^{N\times n}$. There are many techniques and methods in the literature to solve the ICA problem. The simplest method is to assume the distributional form of each of the independent components $p_i(s_i)$ and then solve the ICA problem through a maximum likelihood approach \cite{BellICA}. In practice, the distributions are not known a-priori  so therefore in most methods the distributions are left unspecified. As a result, practitioners and researchers often resort to minimizing a given contrast function to solve the ICA problem.

\subsubsection{Prewhitening}
A popular preprocessing trick for ICA is to prewhiten the data beforehand. This involves the process of finding the matrix $\mathbf{P}$ such that 
\begin{equation}
\label{eqn: prewhiten matrix}
    \z=\mathbf{P}^{-1}\x,
\end{equation}
where $\z$ has identity covariance matrix. This initial preprocessing step, which can be executed through singular value decomposition based techniques, uncorrelates the mixed components and handles the issue of when there are more mixing components than independent components $d>n$. Moreover, it has the advantage that the matrix $\mathbf{Q}=\mathbf{P}^{-1}{\mathbf{M}}$ to be found is necessarily  orthogonal and square. For the sake of presentation, we will subsequently consider the whitened version of the data for the remainder of this section and the corresponding whitened ICA equation
\begin{equation}
    \label{eq: whitened ICA set up}
    \z=\mathbf{Q}\s.
\end{equation}
In Section \ref{subsec: unwhitened discussion}, we propose 2 equivalent sketching frameworks that can either incorporate prewhitened and unwhitened data.
 
 \subsubsection{Cumulant Based ICA}\label{subsubsec: cumulant based ICA}
Tensorial or cumulant based methods are a group of techniques used to solve the ICA problem and are of particular interest in this paper. Statistical properties of the data instance $\z$ can be described by its cumulants $\mathcal{Z}^K_{i_1i_2\dots i_K}$. In the multivariate setting, cumulants give rise to tensors, denoted $\mathcal{Z}^K$ for a cumulant tensor of order $K$. Assuming the data has zero mean, the first four cumulants are defined \cite{de1997signal} as 
\begin{equation}
\label{eqns: higher order statistics}
  \begin{aligned}
   \mathcal{Z}^1_i=&0\\
   \mathcal{Z}^2_{ij}=&\E[z_iz_j] \\
   \mathcal{Z}^3_{ijk}=&\E[z_iz_jz_k]\\
    \mathcal{Z}^4_{ijkl}=&\E[z_iz_jz_kz_l]-\E[z_iz_j]\E[z_kz_l]-\E[z_iz_k]\E[z_jz_l]-\E[z_iz_l]\E[z_jz_k]\\
\end{aligned}  
\end{equation}
\noindent where $\E$ is the expectation operator.  Given the model in (\ref{eq: whitened ICA set up}) equating $\z$ to $\s$, then the following multilinear property holds for their associated cumulant tensors:

\begin{equation}
\label{eqn: first multilinear prop}
    \mathcal{Z}^K=\mathcal{S}^K\times_1\Q\times_2 \Q\times_3\dots\times_K\Q,
\end{equation}
 \noindent where $\times_j$ represents the $j$-mode tensor-matrix product and $\mathcal{S}^K$ represents the $K^{th}$ order cumulant tensor of the independent source signals \cite{de1997signal}. In this paper we will only consider $4^{th}$ order cumulant tensors (e.g. $K=4$) and for the sake of simplified notation we shall drop the superscript in (\ref{eqn: first multilinear prop}) for the rest of the discussion. We denote by $\mathfrak{C}\subset\R^{d\times d\times d\times d}$ the space of 4th order cumulant tensors which account for the symmetry in (\ref{eqns: higher order statistics}), where each cumulant tensor $\mathcal{Z}\in\mathfrak{C}$ has a maximum of $\binom{n+3}{4}$ unique entries (degrees of freedom) \cite{COMONTensorDiag}. The diagonal entries $\mathcal{Z}_{ijkl}$ $(ijkl=iiii)$ are the auto-cumulants of $\z$, while the off-diagonal entries  $\mathcal{Z}_{ijkl}$ $(ijkl\neq iiii)$ are the cross-cumulants. If the variables $(z_1,z_2,\dots,z_n)$ are statistically independent then, as seen by (\ref{eqns: higher order statistics}), the cross-cumulants vanish to 0 resulting in a strictly diagonal cumulant tensor. In other words, independence implies diagonality. It is shown in \cite{comon2009tensor} that under mild conditions the converse is also true, i.e. diagonality implies independence. Cumulant based ICA can therefore be seen as finding a linear transformation $\mathbf{W} (=\Q^T)$ such that the resulting cumulant tensor  \begin{equation}
     \label{Eqn: diagonal cumulant tensor}
     \mathcal{S}:=\mathcal{Z}\times_1 \mathbf{W} \times_2\mathbf{W} \times_3 \mathbf{W} \times_4 \mathbf{W}
 \end{equation}
 is strictly diagonal. We can define the following ICA model set:
 \begin{equation}
    \label{eqn: model set}
    \mathfrak{S}_\mathcal{H}:=\{\pi\mid \mathcal{Z}=\mathcal{S}\times_1\Q\times_2\Q\times_3\Q\times_4\Q,\,\,\mathcal{S}\in\mathfrak{D},\,\Q^T\Q=I\},
\end{equation}
where $\mathfrak{D}\in\mathfrak{C}$ is the set of diagonal cumulant tensors, defined formally as

\begin{equation}
\label{eqn: set of diagonal tensors}
    \mathfrak{D}\coloneqq\left\{\mathcal{S}\mid\mathcal{S}_{ijkl}=0\,\, \forall ijkl\neq iiii \text{ and }\mathcal{S}_{iiii}\geq \epsilon_\mathcal{S}\right\},
\end{equation}

\noindent Here, we have the additional requirement\footnote{A standard requirement in ICA is that at maximum one diagonal cumulant $\mathcal{S}_{iiii}$ can be zero which arises from the ICA assumption that at maximum one source signal $s_i$ is Gaussian \cite{hyvarinen2000independent}. Here we have the slightly stronger assumption that all source signals are non-gaussian.} that each diagonal cumulant is greater than or equal to a small constant $\epsilon_\mathcal{S}> 0$. The expected cumulant tensor $\mathcal{Z}$ is typically not known owing to finite data length approximations and non-Gaussian additive noise \cite{de2000introduction2ICA} and so in general $\mathcal{Z}$ cannot be \textit{fully} diagonalized by a linear transform. As a result, contrast functions are used to approximately diagonalize $\mathcal{Z}$ and maximize the independence of the system.

 \subsubsection{Contrast Functions}\label{susubsec: contrast functions}
A contrast function $\Psi:\mathbb{P}\mapsto\mathbb{R}$ is a mapping from the space of distributions $\mathbb{P}$ to the real line and is a measure of the statistical independence between latent variables in a linear system \cite{comon1994independent,hyvarinen2000independent}. For a function to be a contrast function it must be permutation and scaling invariant, and also maximized if and only if the distributions are statistically independent \cite{comon1994independent}. For instance, the negative mutual information satisfies the contrast conditions, although it can be difficult to estimate in practice. Typically, contrast functions are tractable approximations of information theoretical measures such as negative mutual information, maximum likelihood and negentropy. Comon proposed various cumulant based contrast functions in \cite{comon1994independent}, that are Edgeworth expansions of information theoretic measures. The simplest is given by 
 \begin{equation}
     \label{eqn: cumulant contrast function}
     \Psi(\mathbf{W})=\sum^n_{i=1} \hat{\mathcal{S}}_{iiii}.
 \end{equation}
 where $\hat{\mathcal{S}}$ is the $4^{th}$ order cumulant tensor corresponding to the variable $\hat{\s}=\mathbf{W}\z$. When $\Psi(\mathbf{W})$ is maximum, the components of $\hat{\s}$ are independent giving $\hat{\s}=\s$ and $\mathbf{W}=\Q^T$. A 4th order cumulant tensor that has a decomposition as given in (\ref{Eqn: diagonal cumulant tensor}) and is therefore a member of the model set, $\mathcal{Z}\in\mathfrak{S}_\mathcal{H}$, maximizes any given cumulant based contrast function \cite{comon1994independent}, and hence minimizes the associated information theoretic measure and the risk in (\ref{Eqn: risk intro}).
 For further details, comprehensive reviews of cumulants and tensors can be found in \cite{comon1994independent,de1997signal}.
 
\par As discussed in Section \ref{subsec: Compressive PCA}, the 4th order cumulant tensor will act as an intermediary statistic space ($\mathbb{S}=\mathfrak{C}$). It is well documented through identifiability results in the cumulant based ICA literature \cite{comon1994independent,de2000introduction2ICA} that the parameters of the ICA model, namely the mixing matrix $\Q$, can be estimated solely from the 4th order cumulant tensor. As such, the 4th order cumulant tensor can be seen in its own right as a sketch, albeit inefficient with respect to compression being $\mathcal{O}(n^4)$. In the next section, we motivate the principles behind sketching the 4th order cumulant tensor to form a compact representational sketch that has size $\mathcal{O}(n^2)$.

\section{Compressive Learning Principles for Cumulant ICA}\label{sec: Motivation}
It was discussed in Section \ref{subsubsec: cumulant based ICA} that the model set $\mathfrak{S}_\mathcal{H}$ of the ICA problem defined in (\ref{eqn: model set}), maximizes any given cumulant based contrast function \cite{comon1994independent}. The model set $\mathfrak{S}_\mathcal{H}$ is itself a low-dimensional space residing in the space of cumulant tensors $\mathfrak{C}$. Specifically, $\mathfrak{S}_\mathcal{H}$ can be described as the product set of the set of $n\times n$ orthogonal matrices, denoted $O(n)$, and the set of diagonal cumulant tensors $\mathfrak{D}$ that was defined in (\ref{Eqn: diagonal cumulant tensor}). We can therefore initially count the degrees of freedom of the model set $\mathfrak{S}_\mathcal{H}$:

\begin{itemize}
    \item $\mathfrak{D}$ - A maximum of $n$ degrees of freedom on the leading diagonal.
    \item  $O(n)$ -  A maximum of  $\frac{n(n-1)}{2}$ degrees of freedom \cite{Szarek_OrthogonalCovering}.
\end{itemize}
In total, the model set has $\frac{n(n+1)}{2}$ degrees of freedom. In comparison, the space of 4th order cumulant tensors $\mathfrak{C}$, in which the model set resides, has $p\coloneqq\binom{n+3}{4}\approx\mathcal{O}(n^4)$ degrees of freedom. As the model set is of low complexity, in principle we could form a sketch of the 4th order cumulant tensor $\mathcal{Z}$ and estimate the parameters of the ICA model, namely the mixing matrix $\Q$, solely from the sketch. The sketch of the 4th order cumulant tensor $\mathcal{Z}$ is defined by
\begin{equation}
    \label{eqn: sketch motivation}
    \y^\textbf{w}=\mathcal{A}\left(\mathcal{Z}\right),
\end{equation}
where $\textbf{w}$ denotes that the sketch is acting on the whitened data $\z$. The computation of the sketch is very related to the sketching method of compressive PCA highlighted in Table \ref{Table: CL summary}. Akin to compressive PCA, the sketching operator $\mathcal{A}$ acts on the finite dimensional space of 4th order cumulant tensors instead of the infinite dimensional probability space which is left unspecified due to the nature of the ICA model. The ICA sketch defined in (\ref{eqn: sketch motivation}) draws strong connections to finite dimensional compressive sensing \cite{candes2008introduction,Donoho_CS} where limited (random) measurements of a finite dimensional sparse vector are taken to reduce the complexities associated with signal acquisition. Throughout the compressive sensing literature \cite{candes2008introduction,Donoho_CS,candes2011tight}, the restricted isometry property (RIP) is fundamental tool that is extensively used to show that a sketching operator $\mathcal{A}$ stably embeds elements of the model set into a compressive domain $\R^m$, provided that the sketch dimension $m$ is of sufficient size. In other words, given a sketching operator $\mathcal{A}$, it proves that the distance between every pair of signals in the model set are approximately preserved under the action of the sketch therefore providing a near isometry. In the case of compressive ICA, given two elements of the ICA model set $\mathcal{Z}_1,\mathcal{Z}_2\in\mathfrak{S}_\mathcal{H}$ and an RIP constant $\delta\in(0,1)$, then 
\begin{equation}
    \label{Eqn: Initial RIP}
    (1-\delta)\lVert\mathcal{Z}_1-\mathcal{Z}_2\rVert^2\leq \lVert\mathcal{A}(\mathcal{Z}_1-\mathcal{Z}_2)\rVert^2\leq (1+\delta)\lVert\mathcal{Z}_1-\mathcal{Z}_2\rVert^2
\end{equation}
provided that the sketch size $m$ is of sufficient dimension. In many cases, the sketch size $m$ is sufficient to be of the order of the degrees of freedom of the model set. In \cite{bourrier2014fundamental,BlumensathIPG11}, it is proved that if the lower RIP (LRIP) holds for a given sketching operator $\mathcal{A}$, e.g. the left of (\ref{Eqn: Initial RIP}), then there exists a robust decoder $\Delta$ that recovers a signal from the model set in a stable manner with respect to noise and signals that lie close to the model set. Moreover, it is proved in \cite{bourrier2014fundamental} that if the LRIP holds for the sketching operator $\mathcal{A}$ on the model set $\mathfrak{S}_\mathcal{H}$ then the decoder $\Delta$ can be the constrained $\ell_2$ optimization, for instance
\begin{equation}
    \label{eqn: initial constrained l_2}
    \Delta\left(\y^\textbf{w},\mathcal{A}\right) \in \min_{\mathcal{Z}\in\mathfrak{S}_\mathcal{H}}\lVert\y^\textbf{w}-\mathcal{A}(\mathcal{Z})\rVert_2.
\end{equation}
In principle, if the RIP can be proved for a sketching operator $\mathcal{A}$ on the ICA model set $\mathfrak{S}_\mathcal{H}$, then we have an optimization strategy for solving the compressive ICA problem.

\section{Compressive Independent Component Analysis Theory}\label{sec: CICA Theory}
We begin by explicitly defining the sketching operator $\mathcal{A}:\mathfrak{C}\mapsto\R^m$ as
\begin{equation}
\label{Eqn: Subgauss sketch operator}
    \mathcal{A}(\mathcal{Z})=\A \text{vec}(\mathcal{Z}),
\end{equation}
where $\A\in\R^{m\times p}$ and $\text{vec}$ denotes the vectorization operator. Here we assume $\A$ is some random measurement matrix where the entries $\A_{ij}$ are sampled according to some distributing law, $\A_{ij}\sim\Lambda$. In this paper, we consider two randomized linear dimension reduction maps, namely the Gaussian map and the subsampled randomized Hadamard transform (SRHT) stated below. The CICA RIP, our main result stated in Theorem \ref{Thm: RIP Result}, is proved using the Gaussian map, however fast Johnson-Lindenstrauss transforms (FJLT), for instance the SRHT, still work in practice as will be discussed in Section \ref{sec: results}.

\subsubsection{Gaussian Maps}
The most traditional randomized linear dimension reduction map is the subgaussian matrix which has been used extensively in the CS literature \cite{candes2008introduction,Donoho_CS}. The subgaussian matrix $\mathbf{A}\in\R^{m\times p}$ has entries that follow
\begin{equation}
    \label{eqn: subgaussian matrix entries sketch}
    \mathbf{A}_{ij}\sim\mathcal{N}\left(0,m^{-\frac{1}{2}}\right).
\end{equation}
Gaussian maps typically require $\mathcal{O}(mp)$ in memory as well as exhibiting a computational complexity of $\mathcal{O}\left(mp\right)$.
\subsubsection{Subsampled Randomized Hadamard Transform}\label{subsubsec: SRHT sec}
The SRHT is an instance of a FJLT that approximates the properties of the full Gaussian map \cite{krahmer2011new}. Here $\mathbf{A}\in\R^{m\times p}$ is defined as 
\begin{equation}
    \label{Eqn: SRHT Sketch}
    \mathbf{A}=\frac{1}{\sqrt{mp}}\mathbf{DH}\mathbf{S}_\text{sub},
\end{equation}
where
\begin{itemize}
    \item $\mathbf{D}\in\R^{p\times p}$ is a diagonal matrix whose elements are independent random signs $\{1,-1\}$.
    \item $\mathbf{H}\in\R^{p\times p}$ is a normalised Walsh-Hadamard matrix.
    \item $\mathbf{S}_\text{sub}\in\R^{m\times p}$ is a matrix consisting of a a subset of $m$ randomly sampled rows from the $p\times p$ identity matrix.
\end{itemize}
\noindent The SRHT is particularly cheaper to compute and store in comparison to the Gaussian map. As we do not explicitly store $\mathbf{H}$, the SRHT only requires $\mathcal{O}\left(m+p\right)$ in memory \cite{tropp2011improved}. In addition, the computational complexity of computing the sketch reduces to $\mathcal{O}\left(p\log(m)\right)$ in comparison to using the Gaussian map \cite{ailon2006approximate,tropp2011improved}. Below we state our main result of the paper.

\begin{theorem}[Compressive ICA RIP]
\label{Thm: RIP Result}
Let $\mathcal{Z}_1,\mathcal{Z}_2\in\modelset$ and denote $\mathcal{A}$ the Gaussian map sketching operator defined in (\ref{eqn: subgaussian matrix entries sketch}). Then $\mathcal{A}$ satisfies the RIP in (\ref{Eqn: Initial RIP}) with constant $\delta\in (0,1)$ and probability $1-\xi$ provided that
\begin{equation}
    m\geq\frac{C}{\delta^2}\max\left\{2n(n+1)\log(C_5),\log\left(\frac{6}{\xi}\right)\right\},
\end{equation}
where $C>0$ is an absolute constant and $C_5$ is a constant defined in Lemma \ref{lem: covering number of the normalized model set}.
\end{theorem}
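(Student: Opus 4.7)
The plan is to follow the standard covering-plus-concentration strategy used throughout the compressive sensing literature to establish RIP on structured low-dimensional sets. The argument splits into three ingredients: a homogenization reduction to the normalized secant set, pointwise Gaussian concentration on an $\epsilon$-net of that set, and a union bound / net-to-set transfer argument.

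First, I would exploit the fact that both sides of (\ref{Eqn: Initial RIP}) are homogeneous of degree two in the difference $\mathcal{Z}_1-\mathcal{Z}_2$. Hence it suffices to prove the scalar concentration bound $\bigl|\|\mathcal{A}(\mathcal{U})\|_2^2 - 1\bigr|\le \delta$ uniformly over all unit-Frobenius-norm tensors $\mathcal{U}$ lying in the normalized secant set
\begin{equation*}
\mathcal{T} \;:=\; \left\{\tfrac{\mathcal{Z}_1-\mathcal{Z}_2}{\|\mathcal{Z}_1-\mathcal{Z}_2\|}\;:\;\mathcal{Z}_1,\mathcal{Z}_2\in\modelset,\;\mathcal{Z}_1\neq\mathcal{Z}_2\right\}.
\end{equation*}
Since the sketching matrix $\A$ in (\ref{eqn: subgaussian matrix entries sketch}) has i.i.d.\ $\mathcal{N}(0,1/m)$ entries, for any fixed unit vector $u=\vc(\mathcal{U})$ the quantity $m\|\A u\|_2^2$ is a $\chi^2_m$ random variable. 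A Laurent–Massart style tail bound therefore yields
\begin{equation*}
\Pr\!\bigl(\,\bigl|\|\A u\|_2^2 - 1\bigr|>\delta\,\bigr)\;\le\;2\exp\!\bigl(-c\,m\,\delta^2\bigr)
\end{equation*}
for an absolute constant $c>0$ and every $\delta\in(0,1)$.

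Second, I would invoke Lemma \ref{lem: covering number of the normalized model set} to obtain, for a suitable choice of the resolution $\epsilon\asymp\delta$, an $\epsilon$-net $\mathcal{N}_\epsilon\subset\mathcal{T}$ of cardinality at most $C_5^{2n(n+1)}$. The exponent $2n(n+1)$ is the correct scaling because $\mathcal{T}$ sits inside the Minkowski difference $\modelset-\modelset$ and the intrinsic dimension of $\modelset$ was computed in Section \ref{sec: Motivation} to be $n(n+1)/2$, inherited from the $O(n)$ factor (Szarek's bound) combined with the diagonal factor $\mathfrak{D}$ restricted by the floor $\epsilon_\mathcal{S}$ in (\ref{eqn: set of diagonal tensors}) and normalization. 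A union bound over $\mathcal{N}_\epsilon$ together with the pointwise concentration gives
\begin{equation*}
\Pr\!\bigl(\,\exists\,\mathcal{U}\in\mathcal{N}_\epsilon:\;\bigl|\|\mathcal{A}(\mathcal{U})\|_2^2 - 1\bigr|>\tfrac{\delta}{2}\,\bigr)\;\le\;2\,C_5^{2n(n+1)}\exp\!\bigl(-c\,m\,\delta^2\bigr).
\end{equation*}
Finally, a standard net-to-set argument (approximate any $\mathcal{U}\in\mathcal{T}$ by its nearest $\mathcal{U}'\in\mathcal{N}_\epsilon$, use linearity of $\mathcal{A}$ and the triangle inequality, then take a supremum and absorb the slack into $\delta$) transfers the bound from the net to all of $\mathcal{T}$ at the price of a constant factor in the exponents. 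Requiring the resulting failure probability to be at most $\xi$ and solving for $m$ after absorbing constants into the universal $C$ yields the stated sample-complexity bound.

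The main obstacle is the covering number estimate provided by Lemma \ref{lem: covering number of the normalized model set}; in particular, one must show that although the secant set inherits twice the dimension of $\modelset$, normalization does not create singularities that inflate the metric entropy. This is exactly where the positivity floor $\epsilon_\mathcal{S}>0$ imposed in (\ref{eqn: set of diagonal tensors}) pays off: it bounds the Jacobian of the map $(\Q,\mathcal{S})\mapsto \vc(\mathcal{S}\times_1\Q\times_2\Q\times_3\Q\times_4\Q)/\|\cdot\|$ away from degeneracy, keeping the covering exponent at the optimal $2n(n+1)$. Beyond this lemma, the remaining ingredients (chi-squared concentration and the union bound) are classical and introduce no further difficulty.
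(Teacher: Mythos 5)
Your proposal is correct and follows essentially the same route as the paper: both arguments rest on the covering-number bound for the normalized secant set (Lemma \ref{lem: covering number of the normalized model set}) combined with Gaussian concentration and a union bound over the net. The only difference is presentational --- the paper verifies the two hypotheses (finite upper box-counting dimension of $\mathfrak{N}(\modelset-\modelset)$ and subgaussian concentration of the rows of $\A$) and then outsources the net-to-set chaining to Theorem 8 of the ``Recipe'' framework of Puy \emph{et al.}, whereas you carry out that chaining step explicitly.
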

\noindent The proof of Theorem \ref{Thm: RIP Result} is detailed in Section \ref{subsec: proof of main theorem}.
\begin{corollary}[Information Preservation]
\label{Corr: info preserve}
Let $\mathcal{Z}^*\in\mathfrak{C}$ be an arbitrary 4th order cumulant tensor and denote $\mathbf{y}^\textbf{w}=\mathcal{A}\left(\mathcal{Z}^*\right)+\mathbf{e}$ where $\mathbf{e}\in\R^m$ is some additive noise. Furthermore, let  $\Tilde{\mathcal{Z}}\coloneqq\Delta\left(\mathbf{y}^\textbf{w},\mathcal{A}\right)$ denote the solution to (\ref{eqn: initial constrained l_2}). Given that $\mathcal{A}$ satisfies the RIP in Theorem \ref{Thm: RIP Result}, then with probability $1-\xi$
\begin{align}
 \label{Eqn: Bound in main Result}
 \begin{split}
  \lVert\mathcal{Z}^*-\tilde{\mathcal{Z}}\rVert_F\leq& \min_{\mathcal{Z}\in\modelset}\Big(2\lVert\mathcal{Z}^*-\mathcal{Z}\rVert_F
  +\frac{2}{\sqrt{1-\delta}}\lVert\mathbf{A}\vc\left(\mathcal{Z}^*-\mathcal{Z}\right)\rVert_2\Big)+\frac{2}{\sqrt{1-\delta}}\lVert\mathbf{e}\rVert_2+\nu,      
 \end{split}
 \end{align}
 where $0<\nu\leq1$ is a small positive constant.
\end{corollary}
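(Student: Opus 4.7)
The plan is to apply the standard instance-optimal decoder argument in the style of Bourrier et al.\ \cite{bourrier2014fundamental}: leverage the lower half of the RIP granted by Theorem \ref{Thm: RIP Result} on pairs of model-set elements, bridge the off-model target $\mathcal{Z}^*$ to the model set by the triangle inequality, and close the argument using the near-optimality of $\tilde{\mathcal{Z}}$ for the data-fit decoder (\ref{eqn: initial constrained l_2}). All of the reasoning is deterministic, conditioned on the high-probability event on which Theorem \ref{Thm: RIP Result} holds, so the probability $1-\xi$ carries over verbatim.

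Concretely, I would first fix an arbitrary $\mathcal{Z}\in\modelset$ and split, via the triangle inequality,
\[
\lVert\mathcal{Z}^*-\tilde{\mathcal{Z}}\rVert_F\leq\lVert\mathcal{Z}^*-\mathcal{Z}\rVert_F+\lVert\mathcal{Z}-\tilde{\mathcal{Z}}\rVert_F .
\]
Since both $\mathcal{Z}$ and $\tilde{\mathcal{Z}}$ lie in $\modelset$, the LRIP from (\ref{Eqn: Initial RIP}) gives $\lVert\mathcal{Z}-\tilde{\mathcal{Z}}\rVert_F\leq(1-\delta)^{-1/2}\lVert\mathbf{A}\vc(\mathcal{Z}-\tilde{\mathcal{Z}})\rVert_2$. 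Inserting $\pm\mathbf{y}^\mathbf{w}$ and applying the triangle inequality once more yields
\[
\lVert\mathbf{A}\vc(\mathcal{Z}-\tilde{\mathcal{Z}})\rVert_2\leq\lVert\mathbf{y}^\mathbf{w}-\mathcal{A}(\mathcal{Z})\rVert_2+\lVert\mathbf{y}^\mathbf{w}-\mathcal{A}(\tilde{\mathcal{Z}})\rVert_2 .
\]
The near-optimality of $\tilde{\mathcal{Z}}$ for (\ref{eqn: initial constrained l_2}) bounds the second summand by the first up to an optimization slack, after which the substitution $\mathbf{y}^\mathbf{w}=\mathcal{A}(\mathcal{Z}^*)+\mathbf{e}$ together with one final triangle inequality converts the data residual into $\lVert\mathbf{A}\vc(\mathcal{Z}^*-\mathcal{Z})\rVert_2+\lVert\mathbf{e}\rVert_2$. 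Chaining everything and taking the infimum over $\mathcal{Z}\in\modelset$ reproduces (\ref{Eqn: Bound in main Result}); the $\tfrac{2}{\sqrt{1-\delta}}$ prefactors arise naturally from the optimality step followed by the LRIP, while the factor of $2$ in front of $\lVert\mathcal{Z}^*-\mathcal{Z}\rVert_F$ is a conservative statement (the direct chain above actually yields a factor of $1$, which trivially implies the stated factor $2$).

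The main obstacle in making the proof rigorous is handling the additive $\nu$ term. Because $\modelset$ is the nonconvex product of the orthogonal group $O(n)$ with the closed cone of diagonal cumulant tensors from (\ref{eqn: set of diagonal tensors}), the non-convex minimization (\ref{eqn: initial constrained l_2}) is not guaranteed to admit a tractable exact minimizer, so $\tilde{\mathcal{Z}}$ is best treated as a $\nu_0$-near minimizer in the sense that $\lVert\mathbf{y}^\mathbf{w}-\mathcal{A}(\tilde{\mathcal{Z}})\rVert_2\leq\lVert\mathbf{y}^\mathbf{w}-\mathcal{A}(\mathcal{Z})\rVert_2+\nu_0$ for every $\mathcal{Z}\in\modelset$. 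Propagating this slack through the LRIP bound produces the additive constant $\nu=\nu_0/\sqrt{1-\delta}$ in (\ref{Eqn: Bound in main Result}); the assumed smallness $0<\nu\leq 1$ simply encodes that the practical decoder reaches close enough to the infimum. A minor bookkeeping point is that the difference $\mathcal{Z}-\tilde{\mathcal{Z}}$ must live in the secant set on which the RIP was proved, which is immediate here because Theorem \ref{Thm: RIP Result} is phrased directly for $\mathcal{Z}_1,\mathcal{Z}_2\in\modelset$, so no further normalization is required.
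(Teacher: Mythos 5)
Your proposal is correct and follows essentially the same route as the paper: the paper's proof simply invokes Theorem 7 of Bourrier et al.\ \cite{bourrier2014fundamental}, and your argument is a faithful reconstruction of exactly that instance-optimality chain (triangle inequality to the model set, LRIP on the secant $\mathcal{Z}-\tilde{\mathcal{Z}}$, near-optimality of the decoder, and back-substitution of $\mathbf{y}^\mathbf{w}=\mathcal{A}(\mathcal{Z}^*)+\mathbf{e}$), including the correct observation that the direct chain yields a factor of $1$ rather than $2$ on $\lVert\mathcal{Z}^*-\mathcal{Z}\rVert_F$ and that $\nu$ absorbs the optimization slack of the nonconvex decoder.
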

\begin{proof}
Given the LRIP in Theorem \ref{Thm: RIP Result}, we use Theorem 7 in \cite{bourrier2014fundamental} to obtain our result.
\end{proof}
\par The proof of Theorem \ref{Thm: RIP Result} uses covering numbers and $\epsilon$-nets of the normalized secant set of $\modelset$.

\begin{definition}[Secant Set]
The secant set of a set $\modelset$ is defined as 
\begin{equation}
    \modelset-\modelset:=\left\{\mathcal{Y}=\mathcal{Z}_1-\mathcal{Z}_2\mid\mathcal{Z}_1,\mathcal{Z}_2\in\modelset\right\}.
\end{equation}
\end{definition} 

\begin{definition}[Normalised Secant Set]
The normalized secant set $\mathfrak{N}\left(\modelset-\modelset\right)$ of a set $\modelset$  is defined as 
\begin{equation}
    \mathfrak{N}\left(\modelset\right):=\left\{\mathcal{Y}/\lVert\mathcal{Y}\rVert_F\mid\mathcal{Y}\in\left(\modelset -\modelset\right)\setminus\{\mathbf{0}\}\right\},
\end{equation}
where $\mathbf{0}$ defines the zero tensor. 
\end{definition}

\begin{definition}{(Covering number)} Let $\epsilon>0$. The covering number $\text{CN}(\modelset,\lVert\cdot\rVert,\epsilon)$ of a set $\modelset$ is the \textit{minimum number} of closed balls of radius $\epsilon$, with respect to the norm $\lVert\cdot\rVert$, with centres in $\modelset$ needed to cover $\modelset$. The set of centres of these balls is a minimal $\epsilon$-net for $\modelset$.
\end{definition} 

\begin{lemma}[Covering number of $\mathfrak{N}\left(\modelset-\modelset\right)$]
\label{lem: covering number of the normalized model set}
The covering number of $\mathfrak{N}\left(\modelset-\modelset\right)$ with respect to the Frobenius norm $\lVert \cdot\rVert_F$ is 
\begin{equation}
    \text{CN}\left(\mathfrak{N}\left(\modelset-\modelset\right),\lVert\cdot\rVert_F,\epsilon\right)\leq\left(\frac{C_5}{\epsilon}\right)^{2n(n+1)},
\end{equation}
where $C_5>0$ is some constant.
\end{lemma}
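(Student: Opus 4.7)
The plan is to parametrize the normalised secant by a compact product space and push a standard product $\epsilon$-net forward along a Lipschitz map. By the definition of $\modelset$, every element of the secant set has the form
\[
\mathcal{Y} = \mathcal{S}_1 \times_1 Q_1 \times_2 Q_1 \times_3 Q_1 \times_4 Q_1 - \mathcal{S}_2 \times_1 Q_2 \times_2 Q_2 \times_3 Q_2 \times_4 Q_2
\]
for some $Q_i \in O(n)$ and diagonal tensors $\mathcal{S}_i \in \mathfrak{D}$ with diagonals $\mathbf{s}_i \in \mathbb{R}^n$. Because the normalisation $\mathcal{Y}/\lVert\mathcal{Y}\rVert_F$ is invariant under the joint rescaling $(\mathbf{s}_1,\mathbf{s}_2)\mapsto\lambda(\mathbf{s}_1,\mathbf{s}_2)$ with $\lambda>0$, I would restrict to $\lVert(\mathbf{s}_1,\mathbf{s}_2)\rVert_2\le 1$, yielding the compact parameter set $\Theta = O(n)\times O(n)\times B_1^{2n}$.

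To cover $\Theta$, I would combine two classical $\epsilon$-net estimates. Since $\lVert Q\rVert_F=\sqrt{n}$ for every $Q\in O(n)$, the orthogonal group lies in the Frobenius ball of radius $\sqrt{n}$ in $\mathbb{R}^{n^2}$, and the usual volumetric argument gives an $\eta$-net of size at most $(C_1\sqrt{n}/\eta)^{n^2}$ for each factor. The ball $B_1^{2n}$ admits an $\eta$-net of size $(C_2/\eta)^{2n}$. Their product therefore has an $\eta$-net of at most $(C_3/\eta)^{2n^2+2n}=(C_3/\eta)^{2n(n+1)}$ centres, which already matches the target exponent $2n(n+1)$.

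Next, I would push the net forward through $\Psi:(Q_1,Q_2,\mathbf{s})\mapsto \mathcal{Y}/\lVert\mathcal{Y}\rVert_F$. The numerator map $\Phi(Q_1,Q_2,\mathbf{s})=\mathcal{Y}$ is polynomial and multilinear in bounded arguments (degree $4$ in each $Q_i$ via the mode products, degree $1$ in $\mathbf{s}$), so standard estimates give an absolute Lipschitz constant $L=L(n)$ on $\Theta$. The renormalisation $\mathcal{Y}\mapsto\mathcal{Y}/\lVert\mathcal{Y}\rVert_F$ is $2/\lVert\mathcal{Y}\rVert_F$-Lipschitz. Composing, choosing $\eta=\epsilon c_n/(2L)$ for a suitable $c_n$, and absorbing $L$, $c_n$, $\sqrt{n}$ and the various $C_i$ into one constant $C_5$ then yields the claimed covering number $(C_5/\epsilon)^{2n(n+1)}$.

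The hard part will be justifying the uniform lower bound $\lVert\mathcal{Y}\rVert_F\ge c_n>0$ needed for the composition above. Degenerate configurations such as $Q_1\approx Q_2$ with $\mathbf{s}_1\approx\mathbf{s}_2$ drive $\lVert\mathcal{Y}\rVert_F$ arbitrarily small, so $\Psi$ is not globally Lipschitz on $\Theta$. I would deal with this by choosing, for each element of $\mathfrak{N}(\modelset-\modelset)$, a canonical representative in $\Theta$ on which $\lVert\mathcal{Y}\rVert_F$ is bounded away from zero: for fixed $(Q_1,Q_2)$ the map $\mathbf{s}\mapsto\mathcal{Y}$ is linear, and a minimum-norm preimage argument combined with the scale invariance used above provides the desired lower bound away from the measure-zero subset of $O(n)^2$ where this linear map is rank-deficient. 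Handling that residual rank-deficient stratum cleanly—either by treating it as a lower-dimensional piece with a strictly smaller cover or by a small perturbation argument—is the step that will require the most care.
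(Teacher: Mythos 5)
Your long-secant analysis is sound and essentially mirrors the paper's own first step: parametrize by a product of $O(n)$ factors and a ball of diagonal tensors, take a product net of cardinality $(C/\eta)^{2n(n+1)}$, and push it forward through the multilinear map, which is Lipschitz on bounded arguments. The gap is exactly where you flag it: the short secants, and your proposed repair does not close it. The degeneracy is not confined to a measure-zero stratum of $O(n)^2$ where the linear map $\mathbf{s}\mapsto\mathcal{Y}$ is rank-deficient; it occurs along the entire diagonal $Q_1=Q_2$, $\mathbf{s}_1=\mathbf{s}_2$ of your parameter set, and normalized secants of pairs approaching that diagonal converge to normalized \emph{tangent} directions of the model set --- genuinely new points of $\mathfrak{N}(\mathfrak{S}_\mathcal{H}-\mathfrak{S}_\mathcal{H})$ that no well-separated pair produces. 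The scale invariance you invoke cannot rescue this: rescaling $(\mathbf{s}_1,\mathbf{s}_2)$ to force $\lVert\mathcal{Y}\rVert_F$ bounded below conflicts with the constraint $\lVert(\mathbf{s}_1,\mathbf{s}_2)\rVert_2\le 1$ you imposed for compactness, since for $Q_1\to Q_2$ with $\mathbf{s}_1=\mathbf{s}_2$ the ratio $\lVert\mathcal{Y}\rVert_F/\lVert(\mathbf{s}_1,\mathbf{s}_2)\rVert_2$ tends to zero.

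The paper closes this gap with a Clarkson-style two-regime argument that you would need to reproduce: split $\mathfrak{N}(\mathfrak{S}_\mathcal{H}-\mathfrak{S}_\mathcal{H})$ into long secants ($\lVert\mathcal{Z}_1-\mathcal{Z}_2\rVert_F>\eta$), handled essentially as you do, and short secants, which are shown --- via a Taylor approximation bound on the parametrization $f(Q,\mathcal{S})=\mathcal{S}\times_1 Q\times_2 Q\times_3 Q\times_4 Q$, a bounded-curvature estimate on $Df$, and crucially a uniform bound $\lVert Df_u^\dagger\rVert\le C_3$ on the pseudo-inverse of the differential --- to lie within $O(\epsilon_0)$ of a set of normalized tangent vectors $Df_{u_i}^T(u-u')/\lVert\mathcal{Z}-\mathcal{Z}'\rVert_F$, which is then covered separately over a net of base points. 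That inverse-differential bound is precisely where the model-set assumption $\mathcal{S}_{iiii}\ge\epsilon_\mathcal{S}>0$ enters; your proposal never uses this assumption, which is a reliable sign that the essential analytic content of the short-secant case has not been engaged.
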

\begin{proof}
See Appendix \ref{App: covering number model set lemma}.
\end{proof}

\begin{definition}{(Upper box counting dimension)}
\label{def: UBCD}
The upper box counting dimension of a set $S$ is defined as
\begin{equation}
    \text{dim}_\text{B}(S):=\limsup_{\epsilon\rightarrow 0}\log[\text{CN}\left(S,\lVert\cdot\rVert,\epsilon\right)]/\log[1/\epsilon].
\end{equation}
\end{definition}

\subsection{Proof of Theorem \ref{Thm: RIP Result}}\label{subsec: proof of main theorem}
\begin{proof}
To prove a RIP exists for the ICA model set $\modelset$ using the sketching operator $\mathcal{A}$ defined in (\ref{Eqn: Subgauss sketch operator}), we follow a similar line of argument to \cite{candes2011tight,rauhut2017low} by using an $\epsilon$-covering of $\mathfrak{N}\left(\modelset-\modelset\right)$ to extend the concentration results of the random Gaussian matrix $\mathbf{A}$ uniformally over the whole low-dimensional set. Specifically, we use the \textit{Recipe} framework proposed by Puy \textit{et al.} \cite{puyrecipes}, to formulate the compressive ICA RIP proof. The proof is separated by showing that the following assumptions hold:
\begin{itemize}
    \item[\textbf{(A1)}] The normalised secant set, denoted $\mathfrak{N}\left(\modelset-\modelset\right)$, has finite upper-box counting dimension $\text{dim}_\text{B}\left(\mathfrak{N}\left(\modelset-\modelset\right)\right)$ which is strictly bounded by $s\geq 1$, $\text{dim}_\text{B}\left(\mathfrak{N}\left(\modelset-\modelset\right)\right)<s$
    \item[\textbf{(A2)}] The sketching operator $\mathcal{A}$ satisfies concentration inequalities \cite{puyrecipes}. 
\end{itemize}
We begin with Assumption \textbf{(A1)}. Using Lemma \ref{lem: covering number of the normalized model set} and the definition of the upper box counting dimension in Definition \ref{def: UBCD}, it can be seen that $\text{dim}_\text{B}\left(\mathfrak{N}\left(\modelset-\modelset\right)\right)\leq 2n(n+1)$, so for any $s>2n(n+1)$ we satisfy Assumption \textbf{(A1)}. To prove Assumption \textbf{(A2)}, we have the following definition.
\begin{definition}{(Subguassian random variable)}
\label{def: subguass norm}
A subgaussian random variable $X$ is a random variable that satisfies
\begin{equation*}
    (\E\lvert X \rvert^q)^{1/q}\leq C_1\sqrt{q}\text{ for all } q\geq 1,
\end{equation*}
with $C_1>0$. The subgaussian norm of $X$, denoted by $\lVert X\rVert_{\Psi_2}$ is the smallest $C_1$ for which the last property holds, i.e.,
\begin{equation*}
    \lVert X\rVert_{\Psi_2}:=\text{sup}_{q\geq 1}\Big\{q^{-1/2}(\E\lvert X\rvert^q)^{1/q})\Big\}.
\end{equation*}
\end{definition}
\noindent Let $\mathbf{A}_i$ denote the $i$th row of the random Gaussian matrix $\mathbf{A}$. Then we use the fact \cite{vershynin2010introduction,puyrecipes} that 
\begin{equation}
    \lVert\mathbf{A}_i^T\vc\left(\mathcal{Z}\right)\rVert_{\Psi_2}\leq D\lVert\mathcal{Z}\rVert_F
\end{equation}
for all $\mathcal{Z}\in\mathfrak{C}$, where $D>0$ is an absolute constant.
Therefore $\Psi_2\leq D$ and Assumption $\textbf{A2}$ is satisfied. Finally, using Theorem 8 of \cite{puyrecipes}, we get the desired RIP result in Theorem \ref{Thm: RIP Result}.
\end{proof}

\subsection{Finite Sample Effects}
In practice, the sketch is constructed from a finite set of data $\left\{\mathbf{z}_i\right\}_{i=1}^{N}$ such that
\begin{equation}
    \label{Eqn: empirical CICA Sketch}
    \hat{\y}^{\textbf{w}}=\frac{1}{N}\sum^N_{i=1}\Phi^\textbf{w}\left(\z_i\right),
\end{equation}
where $\Phi^\textbf{w}(\cdot)$ is the feature function discussed in Section \ref{subsec: Compressive learning} acting on the whitened data $\z$. For compressive ICA we can explicitly define the feature function, acting on the whitened data, as
\begin{equation}
    \label{Eqn: CICA Feature Function}
    \Phi^\textbf{w}(\z)=\langle \mathbf{A}_j,\z^{\otimes^4}\rangle_F,
\end{equation}
for $j=1,\dots,m$, where $\mathbf{A}_j\in\R^{p}$ are the rows of a Gaussian matrix $\mathbf{A}$ and $\langle \cdot \rangle$ denotes the Frobenius inner product. Furthermore, for shorthand we denote $\z^{\otimes^4}=\z\otimes\z\otimes\z\otimes\z$. In other words, the feature function is taking random quartics of the data point $\z$. Note that the empirical sketch $\hat{\y}^\textbf{w}$ is equivalent to $\hat{\y}^\textbf{w}=\mathcal{A}\left(\hat{\mathcal{Z}}\right)$, as specified in (\ref{eqn: empirical sketch}), where $\hat{\mathcal{Z}}$ is the finite data approximation of the 4th order cumulant tensor $\mathcal{Z}$ defined by
\begin{equation}
\label{eqn: finite comp of tensor}
\begin{aligned}
    \hat{\mathcal{Z}}^4_{ijkl}=\frac{1}{N}\sum^N_{i,j,k,l=1}z_iz_jz_kz_l-\frac{1}{N^2}\sum^N_{i,j=1}z_iz_j\sum^N_{k,l=1}z_kz_l&-\frac{1}{N^2}\sum^N_{i,k=1}z_iz_k\sum^N_{j,l=1}z_jz_l\\
    &\,\,\,\,-\frac{1}{N^2}\sum^N_{i,l=1}z_iz_l\sum^N_{j,k=1}z_jz_k.
\end{aligned}
\end{equation} In this case, the error $\mathbf{e}$ defined in Theorem \ref{Thm: RIP Result} can be attributed to the finite sample effects of approximating the true 4th order cumulant tensor $\mathcal{Z}$ from finite data. We now state our final result of this section. 
\begin{theorem}[Finite Sample Effects]
\label{Thm: finite sample effects}
Assume the independent components $\s$ have finite non-zero kurtosis and have bounded support such that $\lVert\mathcal{S}\rVert_F\leq R$. Consider the ICA model $\z=\mathbf{Q}\s$ with corresponding 4th order cumulant tensor $\mathcal{Z}$ as in (\ref{eqn: first multilinear prop}). By considering any draw of $\z_1,\z_2,\dots,\z_N$ and associated 4th order cumulant tensor approximation $\hat{\mathcal{Z}}$, we have
\begin{equation}
        \lVert\mathbf{A}\vc\left(\mathcal{Z}\right)- \mathbf{A}\vc(\hat{\mathcal{Z}}) \rVert_2\leq \frac{R\sqrt{2(1+\delta)\log(1/\xi)}}{\sqrt{N}}
\end{equation}
with probability at least $1-\xi$ on the drawing of both $\z_i$'s and the random matrix $\mathbf{A}$.
\end{theorem}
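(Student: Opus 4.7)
The plan is to split the joint randomness into two independent pieces---the Gaussian matrix $\mathbf{A}$ and the samples $\{\mathbf{z}_i\}$---and control each with a dedicated concentration inequality, then combine via a union bound.

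First I would condition on the data so that $\vc(\mathcal{Z}-\hat{\mathcal{Z}})$ is a fixed vector independent of $\mathbf{A}$. For any such fixed vector $\mathbf{v}$, the Gaussian map with entries $\mathcal{N}(0,1/m)$ satisfies a standard Johnson--Lindenstrauss / $\chi^2$ tail bound $\|\mathbf{A}\mathbf{v}\|_2\leq\sqrt{1+\delta}\|\mathbf{v}\|_2$ with probability at least $1-\xi/2$ over $\mathbf{A}$---this is precisely the pointwise Gaussian concentration that is extended uniformly over $\mathfrak{N}(\modelset-\modelset)$ in the proof of Theorem~\ref{Thm: RIP Result}. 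Applied with $\mathbf{v}=\vc(\mathcal{Z}-\hat{\mathcal{Z}})$ and noting $\|\vc(\mathcal{Z}-\hat{\mathcal{Z}})\|_2=\|\mathcal{Z}-\hat{\mathcal{Z}}\|_F$, this reduces the task to controlling the Frobenius distance $\|\mathcal{Z}-\hat{\mathcal{Z}}\|_F$.

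Second I would exploit orthogonality of $\mathbf{Q}$ and the multilinearity identity (\ref{eqn: first multilinear prop}) to rewrite $\|\mathcal{Z}-\hat{\mathcal{Z}}\|_F=\|\mathcal{S}-\hat{\mathcal{S}}\|_F$, thereby transporting the bound into the source domain where the assumption $\|\mathcal{S}\|_F\leq R$ is directly useful. Because $\mathbf{s}$ has bounded support, $\hat{\mathcal{S}}$ is a bounded-differences function of the i.i.d.\ samples: swapping a single $\mathbf{s}_i$ alters each entry of the estimator in (\ref{eqn: finite comp of tensor}) by $O(1/N)$ with a constant governed by the support bound and hence by $R$. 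McDiarmid's inequality (or, equivalently, a vector-valued Hoeffding argument applied to $\vc(\hat{\mathcal{S}})$) then yields $\|\mathcal{S}-\hat{\mathcal{S}}\|_F\leq R\sqrt{2\log(1/\xi)/N}$ with probability at least $1-\xi/2$ over the data. A union bound over the two failure events gives the target inequality with probability at least $1-\xi$ on the joint distribution of $\mathbf{A}$ and $\{\mathbf{z}_i\}$.

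The main obstacle I anticipate is the second step: getting the concentration of $\|\mathcal{S}-\hat{\mathcal{S}}\|_F$ at the exact $R/\sqrt{N}$ rate with the clean constant $\sqrt{2}$. The estimator in (\ref{eqn: finite comp of tensor}) is a quartic polynomial in the samples rather than a simple empirical mean, so the bias-correction cross terms introduce additional dependencies that must be tracked when computing the bounded-differences constant. A workable strategy is to isolate the dominant $N^{-1}$ fourth-moment term, bound its fluctuations via Hoeffding, and then absorb the lower-order $O(N^{-2})$ bias contributions into the constant $R$ using $\|\mathcal{S}\|_F\leq R$; the non-zero kurtosis assumption ensures no pathological cancellations in this propagation.
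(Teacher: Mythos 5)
Your proposal is correct and matches the paper's argument in all essential respects: both rest on (i) the $\sqrt{1+\delta}$ upper isometry of the Gaussian map, (ii) orthogonal invariance of the Frobenius norm to pass from $\z$ to $\s$ and invoke the bound $R$, and (iii) a vector-valued Hoeffding/McDiarmid concentration of averages giving the rate $\sqrt{2\log(1/\xi)/N}$. The only difference is the order of operations --- the paper bounds each sketched feature $\Phi^{\textbf{w}}(\z_i)=\mathcal{A}(\z_i^{\otimes 4})$ by $\sqrt{1+\delta}\,R$ and then applies the concentration-of-averages lemma of Rahimi and Recht directly in $\R^m$, whereas you concentrate $\lVert\mathcal{Z}-\hat{\mathcal{Z}}\rVert_F$ in the tensor domain first and apply the pointwise Gaussian bound to the resulting fixed difference vector afterwards --- which is a cosmetic reordering yielding the same bound (and your explicit attention to the bias-correction terms in the cumulant estimator is, if anything, more careful than the paper, which treats $\hat{\mathcal{Z}}$ as a plain fourth-moment average).
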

\begin{proof}
See Appendix \ref{App: Proof of finite sample effects}.
\end{proof}

%\begin{corollary}
%Given independent components with bounded support, e.g. $\lVert\s\rVert\leq R$, then the error norm %$\lVert\mathbf{e}\rVert_2$ in (\ref{Eqn: Bound in main Result}) of Theorem \ref{Thm: RIP Result} is bounded by 
%\begin{equation*}
 % \lVert\mathbf{e}\rVert_2\leq \frac{R^4\sqrt{2(1+\delta)\log(1/\xi)}}{\sqrt{N}} 
%\end{equation*}
%\end{corollary}

\subsection{Discussion}\label{subsec: unwhitened discussion}
The results in this section are all based on proving a RIP on the model set $\modelset$ defined in (\ref{eqn: model set}), where it is assumed the data $\x$ has been prewhitened to reduce the ICA model to $\z=\mathbf{Q}\s$ as discussed in Section \ref{subsec: ICA}. The prewhitening stage removes some of the degrees of freedom within the ICA inference task as it is necessary to estimate an orthogonal mixing matrix $\Q$. In some sketching cases, we may only see the data once, for example in the streaming context \cite{tropp2019streamingScientific}, and therefore prewhitening may not be possible. The fact that we are now estimating an arbitrary mixing matrix $\mathbf{M}$ instead of an orthogonal mixing matrix $\Q$ increases the degrees of freedom from $\frac{n(n+1)}{2}$ to $n(n+1)$. As a result, we must sketch the unwhitened moment tensor $\mathcal{X}$ such that
\begin{equation}
    \label{eqn: unwhitened sketch}
    \y^\mathbf{u}=\mathcal{A}\left(\mathcal{X}\right),
\end{equation}
where $\mathcal{A}(\cdot)=\mathbf{A}\vc(\cdot)$ and $\mathbf{A}\in\R^{m\times p}$ is a random matrix as defined in (\ref{Eqn: Subgauss sketch operator}). Here $\textbf{u}$ denotes that the sketch is acting on the unwhitened data $\x$. In addition, the feature function $\Phi^\textbf{u}(\cdot)$ for the unwhitened data can be defined as 
  \begin{align}
  \label{eqn: unwhitened feature function}
    \Phi^\textbf{u}(\x) &= \begin{bmatrix}
           \langle \mathbf{A}_j,\x^{\otimes^4}\rangle_F\\
          \x^{\otimes^2}
         \end{bmatrix},
  \end{align}
for $j=1,\dots,m$, where $\mathbf{A}_j\in\R^{p}$ are the rows of the matrix $\mathbf{A}$. Note that the feature function for the unwhitened data now includes quadratic moments\footnote{One could further reduce the size of the unwhitened sketch by instead computing random quadratic moments, however the reduction in complexity is minimal and therefore we leave this for future work.}, as well as random quartic moments, that are needed to estimate the mixing matrix $\mathbf{M}$ which has extra degrees of freedom. Recall from (\ref{eqn: prewhiten matrix}) that the mixing matrix $\mathbf{M}$ has the following decomposition \cite{de2000introduction2ICA}
\begin{equation}
    \mathbf{M}=\mathbf{P}\Q
\end{equation}
where $\mathbf{P}\coloneqq\mathbf{E}^T\Pi\mathbf{E}$ is the eigendecomposition of the covariance matrix $\mathbb{E}[\x\x^T]$ and $\mathbf{E}\in\R^{n\times n}$, $\Pi\in\R^{n\times n}$ are an orthogonal and diagonal matrix, respectively.

\section{CICA Algorithms}\label{sec: algorithm}
In this section we propose two distinct compressive ICA algorithms to estimate the mixing matrix $\mathbf{M}$ for both the whitened and unwhitened case.
 \subsection{Iterative Projection Gradient}\label{Subsec: IPG CICA}
 Iterative projection gradient (IPG) descent is a popular optimization scheme which enforces low dimensional structure e.g. sparsity, rank, etc, by projecting the object of interest onto the model set $\modelset$ after each subsequent gradient step. An iterative hard thresholding scheme was proposed in sparsity based compressive sensing \cite{blumensath2008iterative2,blumensath2009iterative},  where the smallest $n-k$ absolute entries are thresholded to zero to enforce the sparsity constraint and project the object onto the $k$-sparse model set. Blumensath \cite{BlumensathIPG11} shows that the thresholding operator is an orthogonal projection onto the $k$-sparse set thereby projecting to an element on the model set that is of minimal distance. For the case of compressive ICA, we also seek an orthogonal projection on to the ICA model set $\modelset$. Formally, we can define an orthogonal projection operator $\mathcal{P}_\modelset:\mathfrak{C}\mapsto\modelset$ of a 4th order cumulant tensor $\mathcal{Z}^*$ as 
 \begin{equation}
 \label{Eqn: Orthogonal Projection}
 \mathcal{P}_\modelset\left(\mathcal{Z}^*\right)\in\argmin_{\mathcal{Z}\in\modelset}\lVert\mathcal{Z}^*-\mathcal{Z}\rVert_F.
 \end{equation}
In other words, $\mathcal{P}_\modelset$ projects the object $\mathcal{Z}^*\in\mathfrak{C}$ onto the element in the model set that is of minimum distance w.r.t the Frobenius norm. In practice it is often difficult to find a projection operator that is both orthogonal and tractable in terms of computation. In \cite{cardoso1,cardoso2}, Cardoso showed that the ICA model set $\modelset\subseteq \mathfrak{R}\cap\mathfrak{L}$ where $\mathfrak{R}$ is the set of rank-$n$ tensors defined as 
\begin{equation}
    \label{eqn: set of rank-n tensors}
   \mathfrak{R}:=\{\mathcal{Z}\in\mathfrak{R}\mid \text{rank}(\Bar{\mathbf{Z}})=n\},
\end{equation}
where $\Bar{\mathbf{Z}}\in\R^{n^2\times n^2}$ is the matrix formed by rearranging the elements of the tensor $\mathcal{Z}$ into a $n^2\times n^2$ Hermitian matrix and where rank defines the standard matrix rank \cite{cardoso1}. 
Moreover, $\mathfrak{L}$ is the set of super-symmetric tensors defined by
\begin{equation}
    \label{eqn: set of supersym tensors}
    \mathfrak{L}:=\{\mathcal{Z}\in\mathfrak{L}\mid \mathcal{Z}_{q(ijkl)}=\mathcal{Z}_{ijkl}\}
\end{equation}
where $q$ defines all permutations of the index $ijkl$. In fact, Cardoso proved in \cite{cardoso2} that locally the converse is true, for instance within some neighbourhood of $\mathcal{Z}$ the following holds:
\begin{equation}
    \label{eqn: converse cardoso}
    \mathfrak{R}\cap\mathfrak{L}\subseteq\modelset.
\end{equation}
Therefore, within some neighbourhood of $\mathcal{Z}^*$, projecting onto the ICA model set $\modelset$ is equivalent to projecting onto $\mathfrak{R}\cap\mathfrak{L}$. Moreover, in \cite{Cadzow1988SignalEC}, Cadzow proved that alternate projections onto $\mathfrak{R}$ and $\mathfrak{L}$ is guaranteed to converge onto the intersection\footnote{In general, rank forcing destroys symmetry while symmetrization destroys the rank-$n$ property, therefore alternate projections are needed until convergence.} $\mathfrak{R}\cap\mathfrak{L}$. Fundamentally, the projections onto $\mathfrak{R}$ (rank-$n$ approximation) and $\mathfrak{L}$ (averaging over permutations), denoted by $\mathcal{P}_\mathfrak{R}$ and $\mathcal{P}_\mathfrak{L}$ respectively, are both simple to compute and are orthogonal. Alternate orthogonal projections onto $\mathfrak{R}$ and $\mathfrak{L}$ ensures a stable projection onto $\mathfrak{R}\cap\mathfrak{L}$ \cite{Cadzow1988SignalEC} which locally, results in an orthogonal projection onto the ICA model set $\modelset$. Formally, we define the orthogonal projection $\mathcal{P}_\modelset$ below in Algorithm \ref{Alg: Orthogonal Projection}. In practice, Algorithm \ref{Alg: Orthogonal Projection} converges to below a small tolerance in very few iterations ($\sim 10$ iterations). We can now state our full CICA IPG algorithm detailed in Algorithm \ref{Alg: CICA IPG}. Here the step size $\mu_j$ is computed optimally to guarantee convergence \cite{BlumensathIPG11,blumensath2009iterative}, $\mathcal{A}^*$ denotes the adjoint sketching operator and $\beta$ is a fixed shrinking step size parameter.

\begin{algorithm}
\caption{$\mathcal{P}_\modelset$ : Orthogonal Projection onto ICA Model Set}
\begin{algorithmic}
\Require Cumulant tensor $\mathcal{Z}^*\in\mathfrak{C}$
\While{Not Converged}
\State Project onto $\mathfrak{R}$: $\mathcal{Z}^1=\mathcal{P}_\mathfrak{R}(\mathcal{Z})$
 (Matricize $\mathcal{Z}$ into a $n^2\times n^2$ Hermitian matrix and take a rank-$n$ approximation using truncated SVD)
 \State Project onto $\mathfrak{L}$: $\mathcal{Z}^2=\mathcal{P}_\mathfrak{R}(\mathcal{Z}^1)$ (Average across all permutations of $q(ijkl)$ for all indices $ijkl$)
\EndWhile
\end{algorithmic}
\label{Alg: Orthogonal Projection}
\end{algorithm}

\begin{algorithm}
\caption{$\texttt{CICA}_\texttt{IPG}$ : Iterative Projection Gradient Descent Compressive ICA}
\begin{algorithmic}
\Require Initialisation $\mathcal{Z}^0$, tolerance $\epsilon$ and shrinking parameter $\beta$.
\While{ $\lVert \y^{\normalfont\textbf{w}}-\mathcal{A}\big(\mathcal{Z}^j\big) \rVert^2_2 > \epsilon$}
\State Compute $\mu_j=\dfrac{\lVert\mathcal{A}^*\big(\y^\textbf{w}-\mathcal{A}\big(\mathcal{Z}^j\big)\big)\rVert^2_F}{\lVert\y^\textbf{w}-\mathcal{A}\big(\mathcal{Z}^j\big)\rVert^2_2}$
\While{${\lVert\y^{\normalfont\textbf{w}}-\mathcal{A}\big(\mathcal{Z}^{j+1}\big)\rVert^2_2}>{\lVert\y^{\normalfont\textbf{w}}-\mathcal{A}\big(\mathcal{Z}^j\big)\rVert^2_2}$}
\State $\mu_j \gets \beta\mu_j$
\State $\mathcal{Z}^{j+\frac{1}{2}}\gets \mathcal{Z}^j+\mu_j\mathcal{A}^*\left(\y^\textbf{w}-\mathcal{A}\left(\mathcal{Z}^{j}\right)\right)$
\State $\mathcal{Z}^{j+1}\gets\mathcal{P}_\modelset\left(\mathcal{Z}^{j+\frac{1}{2}}\right)$
\EndWhile
\EndWhile
\end{algorithmic}
\label{Alg: CICA IPG}
\end{algorithm}

\subsubsection{Unwhitened IPG}
It was discussed in Section \ref{subsec: unwhitened discussion} that it is often convenient, from an online processing point of view, to directly sketch the unwhitened data $\x$. Using the properties of the matrix-tensor product \cite{de1997signal}, it can be seen that 
\begin{equation}
    \mathbf{A}\vc\left(\mathcal{X}\right) = \mathbf{A}\bar{\mathbf{P}}\vc\left(\mathcal{Z}\right),
\end{equation}
where $\bar{\mathbf{P}}\coloneqq \mathbf{P}\otimes\mathbf{P}\otimes\mathbf{P}\otimes\mathbf{P}$. As defined in (\ref{eqn: unwhitened feature function}), the unwhitened feature function $\Phi^\textbf{u}$ includes the second order moment of $\x$, namely $\x^{\otimes^2}$. The empirical sketch $\hat{\mathbf{y}}^\textbf{u}$ therefore includes the sample covariance $\hat{\Sigma}\coloneqq\frac{1}{N}\sum^N_{i=1}\x_i^{\otimes^2}$, which can be used to estimate an approximation of $\mathbf{P}$, denoted $\hat{\mathbf{P}}$, by using the eigenvalue decomposition of $\hat{\Sigma}$ \cite{comon1994independent} at the beginning of Algorithm \ref{Alg: CICA IPG}. By denoting $\hat{\bar{\mathbf{P}}}\coloneqq \hat{\mathbf{P}}\otimes\hat{\mathbf{P}}\otimes\hat{\mathbf{P}}\otimes\hat{\mathbf{P}}$, the gradient step in Algorithm \ref{Alg: CICA IPG} can be replaced by
\begin{equation}
      \mathcal{Z}^{j+\frac{1}{2}}=\mathcal{Z}^j+\mu_j\mathbf{A}^T\big({\y}^\textbf{u}-\mathbf{A}{\hat{\bar{\mathbf{P}}}}\big(\mathcal{Z}^{j}\big)\big),
\end{equation}
as well as the associated step size $\mu_j$ and stopping criteria. As a result, the CICA IPG algorithm proceeds as normal by employing the original orthogonal projection $\mathcal{P}_\modelset$.
\subsection{Alternating Steepest Descent}
The second proposed algorithm in the way of alternating steepest descent (ASD) is inherently different from the IPG scheme previously discussed. To see why, it is insightful to rewrite (\ref{eqn: initial constrained l_2}) in terms of the elements of the product set $\mathfrak{D}$ and O$(n)$:
\begin{equation}
    \label{eqn: ICA optimisation alternative}
    \min_{\substack{\Q^T\Q=I\\\mathcal{S}\in\mathfrak{D}}}F\left(\mathcal{S},\mathbf{Q}\right)=\lVert\y^\textbf{w}-\mathcal{A}(\mathcal{S}\times_1\Q\times_2\Q\times_3\Q\times_4\Q)\rVert^2_2,
\end{equation}
where we have used the multilinear property discussed in (\ref{eqn: first multilinear prop}). As the optimization problem is now explicitly defined by the mixing matrix $\Q$ and a sparse diagonal tensor $\mathcal{S}$, it is sufficient to optimise with respect to these parameters in an alternating steepest descent scheme. This approach contrasts the IPG scheme, as once we initialise the mixing matrix $\Q$ and the diagonal cumulant tensor $\mathcal{S}$ appropriately, then we can optimise directly on the model set $\modelset$. We can initially state the ASD steps:
\begin{enumerate}
    \item $\mathcal{S}^*=\min_{\mathcal{S}\in\mathfrak{D}}F(\mathcal{S},\Q)$
    \item $\Q^*=\min_{\Q^T\Q=I}F(\mathcal{S}^*,\Q)$
\end{enumerate}
\noindent Note that the diagonal cumulant tensor $\mathcal{S}\in\mathfrak{D}$ can be simply reformulated as an $n$ sparse vector with known support, therefore one can perform element-wise differentiation on the $n$ entries $\mathcal{S}_{iiii}$ for $i=1:n$. The second step requires more attention as we have the constraint $\Q^T\Q=\mathbf{I}$ (i.e. $\Q\in\text{O}(n)$). The set of $n\times n$ orthogonal matrices is an instance of a Stiefel manifold \cite{wotaoStiefel}, therefore $F$ is minimized directly on the Stiefel manifold.
\subsubsection{Stiefel Manifold Optimisation}
Given a feasible matrix $\Q$ and the gradient $\nabla_\Q F=\Big(\frac{\partial F(\mathcal{S},\Q)}{\partial\Q_{ij}}\Big)$, define a skew-symmetric matrix $\mathbf{B}$ as
\begin{equation}
    \mathbf{B}=\nabla_\Q F \Q^T-\Q(\nabla_\Q)^T.
\end{equation}
The update on the Stiefel manifold is determined by the Crank-Nicholson scheme \cite{crank1947practical} denoted
\begin{equation}
    Y(\tau)=\Q-\frac{1}{2}\mathbf{B}(\Q+Y(\tau))
\end{equation}
where $Y(\tau)=(I-\frac{\tau}{2}\mathbf{B})^{-1}(I-\frac{\tau}{2}\mathbf{B})\Q$. The matrix $(I-\frac{\tau}{2}\mathbf{B})^{-1}(I-\frac{\tau}{2}\mathbf{B})$ is referred to as the Cayley transform \cite{wotaoStiefel} of $\mathbf{B}$. The descent curve $Y(\tau)$ has the following useful features
\begin{itemize}
    \item $Y(\tau)$ is smooth on $\tau$
    \item $Y(0)=\Q$
    \item $Y(\tau)^TY(\tau)=\Q^T\Q$ for all $\tau\in\R$.
\end{itemize}
As a result, we perform a steepest descent on $\Q$ with line search along the descent curve $Y(\tau)$ with respect to $\tau$. For more details on optimisation methods constrained to the Stiefel manifold refer to \cite{wotaoStiefel}. We can now state our second proposed CICA algorithm in Algorithm \ref{Alg: CICA GD}.

\begin{algorithm}
\caption{$\texttt{CICA}_\texttt{ASD}$ : Alternating Steepest Descent Compressive ICA }
\begin{algorithmic}
\Require Initialisation $\mathcal{Z}^0=\mathcal{S}^0\times_1\Q^0\times_2\Q^0\times_3\Q^0\times_4\Q^0$, tolerance $\epsilon$ and step size $\mu$.
\While{ $\lVert \y^{\normalfont\textbf{w}}-\mathcal{A}\big(\mathcal{Z}_j\big) \rVert^2_2 > \epsilon$}
\State $\mathcal{S}^{j+1}=\mathcal{S}^j+\mu\nabla_\mathcal{S} F\left(\mathcal{S}^j,\Q^j\right)$
\While{Perform line search}
\State $Y(\tau)=\Q-\frac{\tau}{2}\mathbf{B}\left(\Q+Y\left(\tau\right)\right)$
\State $\Q^{t+1}\gets Y\left(\tau^*\right)$
\EndWhile
\State $\mathcal{Z}^{j+1}\gets\mathcal{S}^{j+1}\times_1\Q^{j+1}\times_2\Q^{j+1}\times_3\Q^{j+1}\times_4\Q^{j+1}$
\EndWhile
\end{algorithmic}
\label{Alg: CICA GD}
\end{algorithm}

\subsubsection{Practicalities}\label{subsubsec: practicalities}
We start by stating the computational complexity of each proposed CICA algorithm. Here we assume that a fast SRHT, as discussed in \ref{subsubsec: SRHT sec}, is used to compute the sketch. For the IPG scheme, the symmetry projection $\mathcal{P}_\mathfrak{L}$ costs $\mathcal{O}(n^4)$ flops through averaging along all index permutations. A rank-$r$ approximation of a general matrix $\mathbf{X}\in\R^{m\times n}$ costs $\mathcal{O}(r^2(n+m))$ flops \cite{WOOLFE2008335}, therefore the rank projection operator $\mathcal{P}_\mathfrak{R}$ costs a total of $\mathcal{O}(n^4)$ flops. The gradient step in Algorithm \ref{Alg: CICA IPG} costs a total of $\mathcal{O}(p\log(m))$ flops due to the use of the sketching operator $\mathcal{A}(\mathcal{Z}^j)$ at each iteration which results in the IPG algorithm therefore having a total cost of $\mathcal{O}(p\log(m)+n^4))$ flops. In the second proposed ASD algorithm, the gradient step in terms of the diagonal tensor in Algorithm \ref{Alg: CICA GD}, again has a cost of $\mathcal{O}(p\log(m))$ flops. The line search $Y(\tau)$ costs a total of $\mathcal{O}(n^3)$ flops \cite{wotaoStiefel} resulting in the ASD algorithm having a computational complexity of $\mathcal{O}(p\log(m)+n^3)$. Note that both proposed CICA algorithms have computational complexity that is independent of the length of the data $N$ which can be extremely large for modern day applications. 
\par As is the case for the general ICA problem, the compressive ICA optimisation problem is non-convex and both algorithms proposed may be prone to converging to local minima. As a result, we consider the option of possible restarts at random initialisations to obtain a good solution. We also consider a proxy projection operator that uses a Given's rotation scheme, popular in many ICA algorithms (see \cite{comon1994independent,cardoso1993blind}), that approximately diagonalises the cumulant tensor $\mathcal{Z}$ with respect to some contrast function, followed by thresholding the cross cumulants of that approximately diagonalised tensor to zero \cite{sheehan2019compressive}. We have observed in practice that this proxy projection operator is less sensitive to the non-convex landscape of the optimization problem, which could be explained by the robustness of Given's rotations \cite{comon1994independent}, hence multiple restarts are rarely required. The proxy projection operator, which we denote by $\hat{\mathcal{P}}_\modelset$, costs $\mathcal{O}(n^4)$ flops for the Given's rotation scheme to approximately diagonalise the cumulant tensor \cite{comon1994independent}, and $\mathcal{O}(n^4-n)$ flops for the thresholding of the cross-cumulants. Therefore in total the proxy IPG algorithm has approximately the same computational complexity as our previous IPG algorithm.

\section{Empirical Results}\label{sec: results}
\subsection{Phase Transition}\label{subsec: Phase tran}
Phase transitions are an integral part of analysis that are used frequently in the compressive sensing literature \cite{phase_tran_Lotz} to show a sharp change in the probability of successful reconstruction of the low dimensional object as the sketch size $m$ increases. The location at which the phase transition occurs can provide a tight bound on the required sketch size needed given the number of independent components $n$ and further consolidates the theoretical bound of the RIP derived in Section \ref{sec: CICA Theory}. To set up the phase transition experiment, we constructed the expected cumulant tensor $\mathcal{S}$ of $n$ Laplacian sources and transformed the tensor with an orthogonal mixing matrix $\mathbf{M}$ using the multilinear property in (\ref{eqn: first multilinear prop}), resulting in an expected cumulant tensor $\mathcal{Z}$. For each number of independent components $n$, 250 Monte Carlo simulations on the mixing matrix $\mathbf{M}$ were executed for increasing sketch size $m$ between 2 and 700. A successful reconstruction was determined if the Amari error\footnote{The Amari error is used widely in the ICA literature as it is both scale and permutation invariant, which are the two inherent ambiguities of ICA inference.} \cite{amari1996new} between the true mixing matrix $\mathbf{M}$ and the estimated mixing matrix $\hat{\mathbf{M}}$, defined by
\begin{equation}
\begin{aligned}
    \label{Eqn: Amari error}
    d(\mathbf{M},\hat{\mathbf{M}})=&\frac{1}{2n}\sum^n_{i=1}\Bigg(\frac{\sum^n_{j=1}\lvert b_{ij}\rvert}{\text{max}_j\lvert b_{ij}\rvert}-1 \Bigg)+\frac{1}{2n}\sum^n_{j=1}\Bigg(\frac{\sum^n_{i=1}\lvert b_{ij}\rvert}{\text{max}_i\lvert b_{ij}\rvert}-1 \Bigg),
\end{aligned}
\end{equation}
was smaller than $ d(\mathbf{M},\hat{\mathbf{M}})\leq 10^{-6}$, where $b_{ij}=(\mathbf{M}\hat{\mathbf{M}}^{-1})_{ij}$. The probability of successful reconstruction was given by the number of successful reconstructions within the 250 Monte-Carlo tests. We use the IPG version of the CICA algorithm for these results, although the ASD version provides nearly exactly the same results. It is insightful to begin by fixing the number of sources, here $n=8$, to highlight the sharp transition as shown in Figure \ref{Fig: PhaseTran Solo}. We highlight some important bounds including the multiples of 2 and 4 times the dimension of the model set $\modelset$, depicted by the orange lines. For comparison, the dimension of the space of cumulant tensors $\mathfrak{C}$, in other words the size of the cumulant tensor, is shown by the red line. The phase transition occurs in between 2 and 4 times the model set dimension indicating that choosing $m\geq2n(n+1)$ would be sufficient in successfully inferring the mixing matrix with high probability. 
\par Figure \ref{Fig: PhaseTran All} generalises the single phase transition result for the number of independent components varying between $n=2$ and $n=10$. Once again, the important bounds of the model set dimension (green), 2 and 4 multiples of the model set dimension (orange) and the dimension of the space of cumulant tensors (red) are shown. Figure \ref{Fig: PhaseTran All} explicitly shows that the phase transition empirically occurs within the location of $m=n(n+1)$ and $m=2n(n+1)$ and provides us with a tight practical lower bound of  $m\geq2n(n+1)$ on the sketch size for successful inference of the mixing matrix with high probability. Recall that in Theorem \ref{Thm: RIP Result}, the RIP holds when $m\geq  2n(n+1)$. The location of the phase transition in the empirical results therefore further consolidates the theoretical result. For a given number of independent components $n$, the ratio between the upper orange line (4 times the model set dimension) and the red line (space of cumulant tensor dimension) provides a realistic compression rate in comparison to using the whole cumulant tensor of which many ICA techniques use. Importantly, as the number of independent components increases the ratio between these two lines decreases, resulting in further compression.

\begin{figure}[ht]
\centering
 \includegraphics[width=4.1in]{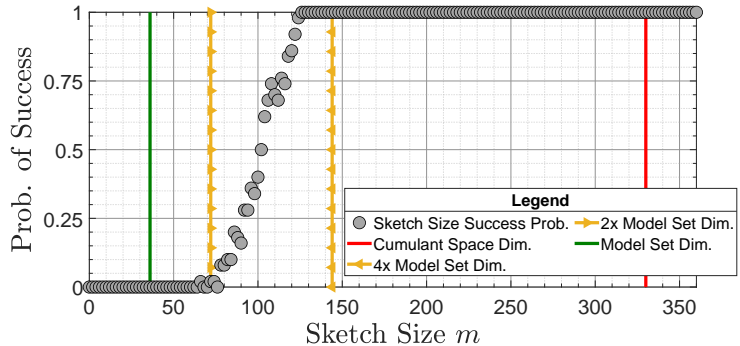}
\caption{A phase transition between unsuccessful and successful mixing matrix inference as the sketch size $m$ increases and the number of independent components is fixed at $n=8$.}
\label{Fig: PhaseTran Solo}
\end{figure}

\begin{figure}[ht]
\centering
 \includegraphics[width=4.8in]{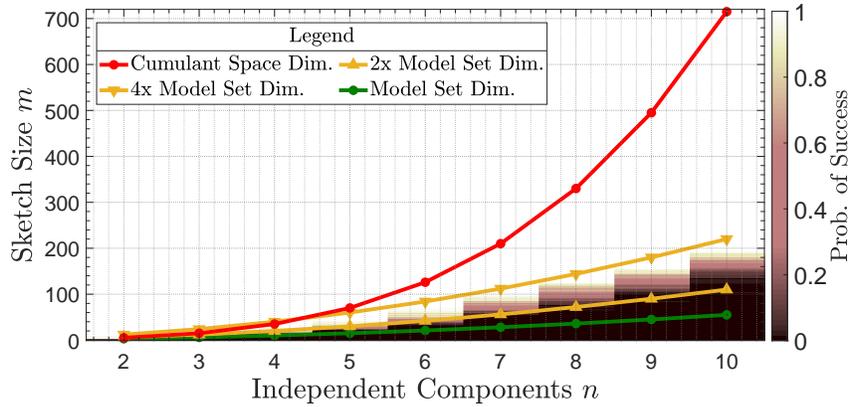}
\caption{A phase transition between unsuccessful and successful mixing matrix inference as the sketch size $m$ and the number of independent components $n$ increases.}
\label{Fig: PhaseTran All}
\end{figure}

\subsection{Statistical Efficiency}\label{subsec: statistical eff}
As was shown in Section \ref{subsec: Phase tran}, the potential compression rates of sketching the cumulant tensor are high which can lead to a significantly reduced memory requirement. In this section we numerically analyse the trade-off between the sketch size and the loss of information. Statistical efficiency is a measure of the variability or quality of an unbiased estimator \cite{FisherEfficiency}, where the Cram\'er-Rao bound provides a lower bound on the variability of an estimator and gives a best case scenario. For fair comparison, we instead use the variability of an estimator inferred by an algorithm that explicitly makes use of the cumulant information, just as the proposed CICA algorithms, to infer the mixing matrix estimate. As such, we use Comon's ICA algorithm, detailed in \cite{comon1994independent}, that minimizes a kurtosis based contrast function using a sequence of Given's rotations on pairwise cumulants as the approximate full data bound (e.g. no compression). We could have equivalently used the well-known Joint Approximation Diagonalization of Eigen-matrices (JADE) algorithm \cite{cardoso1993blind} or any other cumulant based ICA algorithm as the approximate bound, which gives similar results. To this end, we make use of the relative efficiency, defined as
\begin{equation}
    \label{eqn: rel eff measure}
    e(\M_1,\M_2)=\frac{\text{var}\left(d(\M_\theta,\M_1)\right)}{\text{var}\left(d(\M_\theta,\M_2)\right)},
\end{equation}
where $d(\cdot,\cdot)$ is the Amari Error defined in (\ref{Eqn: Amari error}) and $\M_\theta$ is the true mixing matrix. Denoting $\M_{\text{FD}}$ and $\M_{\text{CICA}}$ as the mixing matrix estimates of Comon's ICA algorithm (full data) and the proposed CICA algorithm, respectively, we expect $0\leq e\left(\M_{\text{FD}},\M_{\text{CICA}}\right)\leq 1$ as the Comon algorithm exhibits no compression and makes use of the full cumulant tensor available. As the relative efficiency $e\left(\M_{\text{FD}},\M_{\text{CICA}}\right)$ approaches 1, the sketch estimate becomes more statistically efficient. We perform our efficiency test on $n=6$ independent components of signal length $N=1000$. The signal length does not affect the results as the dependence of $N$ drops out of the relative efficiency measure, for example see \cite{sheehan2019compressive}. For each of the 100 Monte-Carlo simulations, the $n=6$ independent components are randomly sampled \cite{bach2002kernel} from a range of distributions with unique characteristics that are shown in Figure \ref{Fig: efficiency distributions}. The true mixing matrix $\M_\theta$ was sampled once and fixed throughout. For each sketch size $m$, 100 simulations were executed where the mixing matrix was estimated and the Amari error was calculated. The variance of the Amari errors was compared with the full data counterpart and plotted as the relative efficiency in Figure \ref{Fig: Relative Efficiency plot}. Figure \ref{Fig: Relative Efficiency plot} shows the relative efficiency as the sketch size $m$ increases. As $m$ increases the relative efficiency approaches 1 (i.e. as statistically efficient as using the full cumulant tensor with no compression). It is evident that there is a trade-off between the rate of compression and the statistical efficiency, for instance, the smaller the sketch size the greater the loss of statistical efficiency. This is to be expected as the harsher you compress the data the more loss of information you experience. Nontheless, the tradeoff is controlled, for example, a sketch of size $m=100$ has a drop of around $40\%$ of efficiency. 

\begin{figure}[ht]
\centering
\includegraphics[width=5in]{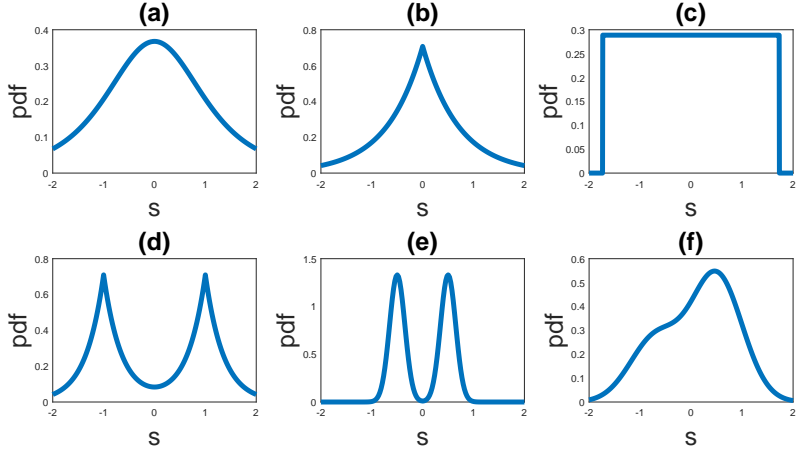}
\caption{(a) Student's $t$ distribution $(\nu=3)$ (b) Laplace distribution $(\mu=0,b=1)$ (c) continuous uniform distribution $(a=-\sqrt{3},b=\sqrt{3})$ (d) mixture of 2 Laplaces $(\mu_1,\mu_2=-1,1\,\,b_1=b_2=1)$ (e) symmetric bimodal mixture of Gaussians $(\mu_1,\mu_2=-1,1\,\, \sigma_1=\sigma_2=0.15)$ (f) asymmetric unimodal mixture of Gaussians $(\mu_1,\mu_2=-0.7,0.5\,\, \sigma_1=\sigma_2=0.5)$}
\label{Fig: efficiency distributions}
\end{figure}

\begin{figure}[ht]
\centering
\includegraphics[width=4.5in]{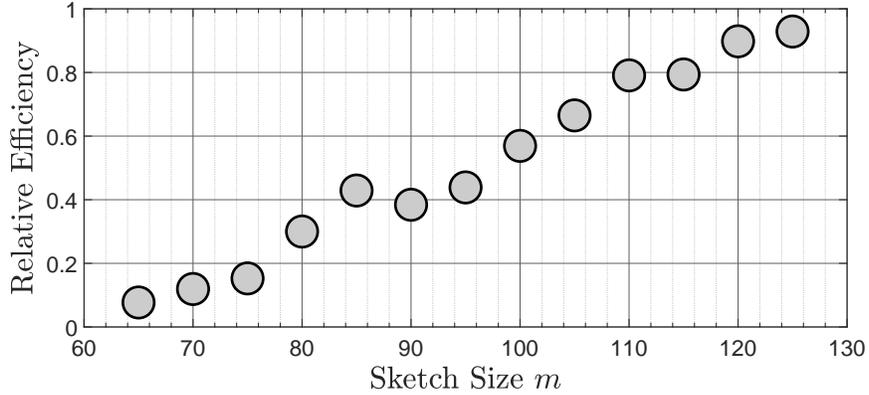}
\caption{The relative efficiency of the full data cumulant tensor (Comon's ICA) and sketch mixing matrix estimates for increasing sketch size $m$.}
\label{Fig: Relative Efficiency plot}
\end{figure}

\subsection{Cylinder Velocity Field }\label{subsec: Cylinder experiment}
We analyse and compare the proposed CICA scheme on a dataset consisting of a flow field around a cylinder obstruction as depicted in Figure \ref{Fig: cylinder velocity}. Using ICA, one can obtain a model that describes the fluctuations of the streamwise velocity field around it's mean value as a function of time. Details of the experimental set up can be seen in \cite{cylinderdataset_original,cylinderdataset_original2}. The dataset is of size $\mathbf{X}\in\mathbb{R}^{14400\times 100}$ consisting of 14400 spatial locations over 100 time intervals. Here we compare our proposed CICA scheme with the well-known fast ICA algorithm \cite{FastICA_ref}, as well the JADE \cite{cardoso1993blind} and Comon algorithm \cite{comon1994independent} which, like the proposed CICA scheme, are cumulant based.An initial prewhitening stage inferred the prewhiten matrix $\mathbf{P}\in\mathbb{R}^{100\times 8}$. Each algorithm then estimated the $\mathbf{Q}\in\mathbb{R}^{8\times 8}$, resulting in a mixing matrix estimate $\mathbf{M}=\mathbf{P}\mathbf{Q}$. For the proposed CICA scheme, the IPG version was used with a SRHT matrix $\mathbf{A}$, however ASD version produces similar reconstructions. Figure \ref{Fig: stream velocity compare} shows the 8 independent components which describe the fluctuations of the streamwise velocity around the cylinder obtained by Fast ICA, JADE, Comon and CICA, respectively. For our proposed CICA algorithm, a sketch of size $m=114$ is used. Visually comparing the reconstructions, one can see that the CICA algorithm performs competitively with negligible artifacts present. Moreover, the CICA scheme achieves a compression rate of approximately 3 in comparison to the other cumulant based ICA methods discussed. \par Next, we compare the effect of the sketch size on the resulting reconstructions. A sketch size of $m=72,108$ and $144$ are considered with the reconstructions shown in Figure \ref{Fig: stream velocity sketch compare}. For $m=108$, the sketch is of sufficient size to successfully identify the unique fluctuations of the velocity field, however, due to the harsher compression rate some notable artefacts are present. For example, in the first and third fluctuations there are some oscillating type artifacts which can be attributed to the higher frequencies in the system. Furthermore, the sketch of size $m=72$ fails to identity the main fluctuations of the velocity field.

\begin{figure}[t!]
\centering
\includegraphics[width=4.8in]{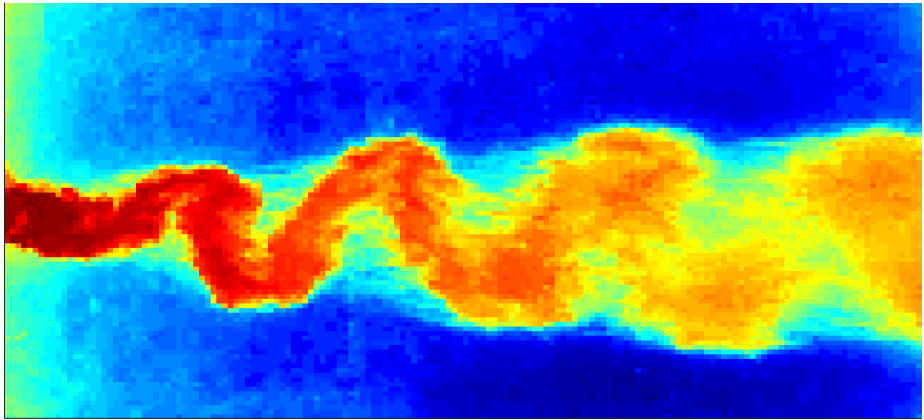}
\caption{The figure shows the velocity field around a cylinder for a fixed point in time.}
\label{Fig: cylinder velocity}
\end{figure}

\begin{figure}[t!]
\centering
\includegraphics[width=5.5in]{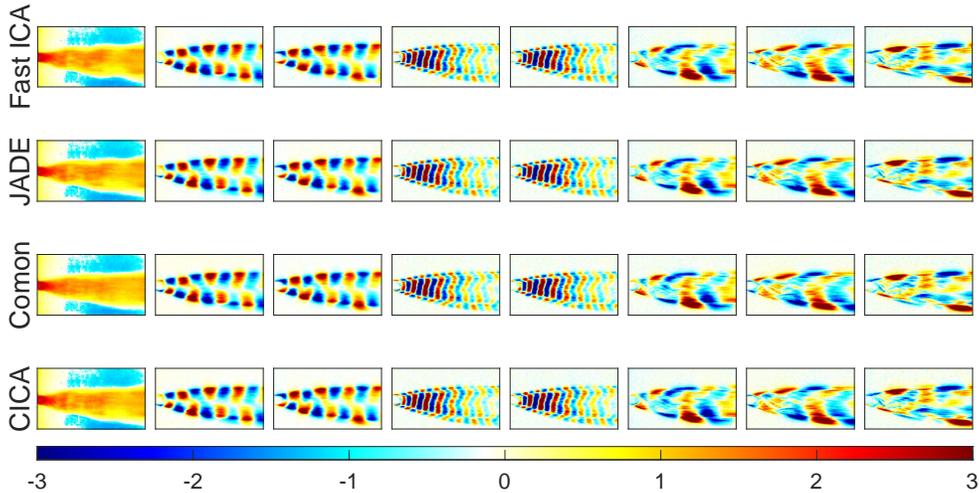}
\caption{From left to right the dominant fluctuations of the streamwise velocity field. From top to bottom the Fast ICA, JADE, Comon and CICA reconstructions.}
\label{Fig: stream velocity compare}
\end{figure}

\begin{figure}[t!]
\centering
\includegraphics[width=5.5in]{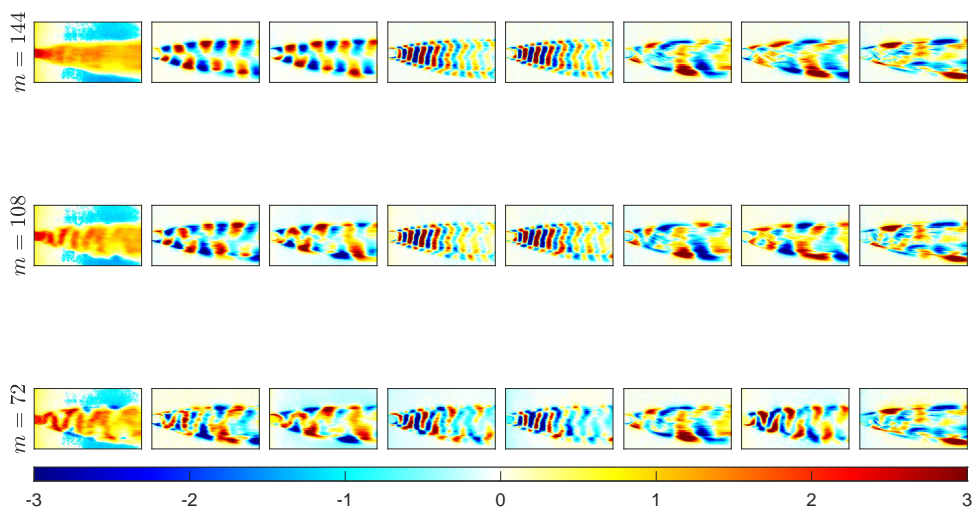}
\caption{The figure shows the effect of the sketch size on the reconstruction of the fluctuations. From top to bottom a sketch size of $m=144,108$ and $72$. }
\label{Fig: stream velocity sketch compare}
\end{figure}

\section{Conclusion}
\par In this paper we initially showed that a low dimensional model set exists for the ICA problem. It was demonstrated theoretically that a RIP exists for the ICA model using Gaussian ensembles provided the sketch size was set proportionally to the model set dimensions, which in turn induced the existence of an instance optimal decoder. The theoretical results were empirically validated by showing the location of a sharp phase transition between a state of unsuccessful inference to a state of successful inference of the ICA mixing matrix as the sketch size increased. Using both synthetic and real data, we analysed the robustness of the proposed CICA algorithms and highlighted the effect of choosing the sketch size $m$. Furthermore, the particular branch of compressive learning was discussed that consists of sketching distribution free models (e.g. PCA, ICA) that leverage some intermediary statistic space, here the space of cumulant tensors, to form the sketch. This poses some interesting open questions on how to design a sketch given other distribution free models and how the low dimension nature of the model set manifests itself structurally, in terms of sparsity, low rank, etc. to construct a practical sketching decoder.  

\section*{Acknowledgements}
This work was supported by the ERC Advanced grant, project C-SENSE (ERC-ADG-2015-694888). Mike E. Davies is also supported by a Royal Society Wolfson Research Merit Award.
\section*{Data Availability Statement}
The data used in Section \ref{subsec: Cylinder experiment} is available at the repository
\url{https://github.com/jonnyhigham/POD_DMD}.

\section*{Code Availability}
A MATLAB implementation of the proposed CICA algorithms are available at the repository  \url{https://gitlab.com/mpsheehan1995/CICA}.

\footnotesize
\bibliographystyle{unsrt}
\bibliography{main.bib}

\newpage
\appendix

\section{Proof of Lemma \ref{lem: covering number of the normalized model set} }\label{App: covering num Appendix A}
To prove Lemma \ref{lem: covering number of the normalized model set}, we use a similar line of argument to Clarkson in \cite{clarkson2008tighter} by splitting the normalized secant set into the set of short and long secants parametrized by a distance $\eta$. First we state an important lemma on covering the model set intersected with the unit sphere in $\mathbb{R}^{\bar{n}}$, where $\bar{n}=n^4$, denoted by $\Bar{\Bar{\mathfrak{S}}}_\mathcal{H}\coloneqq\mathfrak{S}_\mathcal{H}\cap \mathbb{S}^{\bar{n}-1}$ (e.g. $\left\Vert\mathcal{Z}\right\Vert_F=1$), that will used later in the proof.

\begin{lemma}[Covering number of $\Bar{\Bar{\mathfrak{S}}}_\mathcal{H}$]
\label{App: covering number model set lemma}
The covering number of $\Bar{\Bar{\mathfrak{S}}}_\mathcal{H}$ with respect to the Frobenius norm $\lVert \cdot \rVert_F$ is 
\begin{equation}
    \text{CN}\left(\mathfrak{S'},\lVert\cdot\rVert_F,\epsilon\right)\leq \left(\frac{6}{\epsilon}\right)^{n(n+1)}
\end{equation}
\end{lemma}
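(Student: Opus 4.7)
The plan is to parameterize any $\mathcal{Z} \in \bar{\bar{\mathfrak{S}}}_\mathcal{H}$ by a pair $(\Q, \mathcal{S})$ with $\Q \in O(n)$ and $\mathcal{S} \in \mathfrak{D}$ satisfying $\lVert\mathcal{S}\rVert_F = 1$, and then to build an $\epsilon$-net by combining nets on each factor. Because every $j$-mode product with an orthogonal matrix preserves the Frobenius norm, the constraint $\lVert\mathcal{Z}\rVert_F = 1$ is equivalent to $\lVert\mathcal{S}\rVert_F = 1$. Identifying a diagonal cumulant tensor with its vector of $n$ diagonal entries, the set of unit-norm diagonal tensors is isometric to the Euclidean unit sphere $\mathbb{S}^{n-1} \subset \R^n$.

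First I would cover the two factor sets separately. For the sphere, a standard volumetric estimate gives $\text{CN}(\mathbb{S}^{n-1}, \lVert\cdot\rVert_2, \eta) \leq (1+2/\eta)^n \leq (3/\eta)^n$ whenever $\eta \leq 1$. For the orthogonal group, Szarek's theorem (already invoked in the degrees-of-freedom count for $O(n)$) yields $\text{CN}(O(n), \lVert\cdot\rVert_{\mathrm{op}}, \eta) \leq (C_0/\eta)^{n(n-1)/2}$ for an absolute constant $C_0$. Next I would establish a Lipschitz estimate for the parameterization map $\Phi(\Q, \mathcal{S}) = \mathcal{S} \times_1 \Q \times_2 \Q \times_3 \Q \times_4 \Q$. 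Adding and subtracting $\mathcal{S}_2 \times_1 \Q_1 \times_2 \Q_1 \times_3 \Q_1 \times_4 \Q_1$, telescoping over the four modes one $\Q$-slot at a time, and using the bound $\lVert\mathcal{Y} \times_j \M\rVert_F \leq \lVert\M\rVert_{\mathrm{op}} \lVert\mathcal{Y}\rVert_F$ together with $\lVert\Q\rVert_{\mathrm{op}}=1$ and $\lVert\mathcal{S}\rVert_F = 1$, gives
\begin{equation*}
\lVert\Phi(\Q_1, \mathcal{S}_1) - \Phi(\Q_2, \mathcal{S}_2)\rVert_F \leq \lVert\mathcal{S}_1 - \mathcal{S}_2\rVert_F + 4 \lVert\Q_1 - \Q_2\rVert_{\mathrm{op}}.
\end{equation*}

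Combining the pieces: if $\{\mathcal{S}_k\}$ is an $\epsilon/2$-net of the unit sphere in $\mathfrak{D}$ and $\{\Q_j\}$ is an $\epsilon/8$-net of $O(n)$, then by the Lipschitz estimate $\{\Phi(\Q_j, \mathcal{S}_k)\}$ is an $\epsilon$-net of $\bar{\bar{\mathfrak{S}}}_\mathcal{H}$ of cardinality at most $(6/\epsilon)^n (8C_0/\epsilon)^{n(n-1)/2}$. The natural combined exponent is $n + n(n-1)/2 = n(n+1)/2$. Restricting to $\epsilon \leq 1$ so that $\epsilon^{-n(n+1)/2} \leq \epsilon^{-n(n+1)}$ creates enough slack to absorb the constants $6$ and $8C_0$ into a uniform base $6$, producing the stated bound $(6/\epsilon)^{n(n+1)}$.

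The main technical obstacle is verifying the Lipschitz bound cleanly: the telescope produces intermediate tensors in which only some modes have been contracted with $\Q_2$ and the remainder with $\Q_1$, so one must check that their Frobenius norms stay controlled at every stage. This reduces to repeated application of the operator-norm-to-Frobenius-norm inequality for $j$-mode products, using that all $\Q$'s involved are orthogonal, but it is the step where a careless accounting would introduce extra factors and spoil the clean constant $4$ and ultimately the clean base $6$ in the covering number.
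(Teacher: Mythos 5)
Your proposal is correct and follows the same overall architecture as the paper's proof: parameterize $\bar{\bar{\mathfrak{S}}}_\mathcal{H}$ by $O(n)\times(\mathfrak{D}\cap\mathbb{S}^{n-1})$, cover each factor, and push the product net through the map $(\Q,\mathcal{S})\mapsto\mathcal{S}\times_1\Q\times_2\Q\times_3\Q\times_4\Q$ via a Lipschitz estimate. Two differences are worth noting. First, the paper covers $O(n)$ by embedding it in the set of matrices with unit column norms and covering that $n^2$-dimensional ball product (following the Cand\`es--Plan argument), giving exponent $n^2$ and hence a combined exponent $n(n+1)$; you instead invoke Szarek's bound $\text{CN}(O(n),\lVert\cdot\rVert_{\mathrm{op}},\eta)\leq(C_0/\eta)^{n(n-1)/2}$, which uses the intrinsic dimension of the orthogonal group and yields the sharper combined exponent $n(n+1)/2$ --- the actual dimension of the model set --- at the cost of an unspecified absolute constant $C_0$. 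Your absorption of $8C_0$ into the base $6$ by doubling the exponent is slightly hand-wavy (it requires $\epsilon\lesssim 1/C_0$, not merely $\epsilon\leq 1$), but since the lemma is only used asymptotically as $\epsilon\to 0$ in the box-counting dimension, this is harmless. Second, your Lipschitz step is handled more carefully than the paper's: the correct telescoping of $\mathcal{S}\times\Q^{(4)}-\mathcal{S}\times\underline{\smash{\Q}}{}^{(4)}$ produces four cross terms, each bounded by $\lVert\Q-\underline{\smash{\Q}}\rVert_{\mathrm{op}}\lVert\mathcal{S}\rVert_F$, giving your factor of $4$ and the need for an $\epsilon/8$-net of $O(n)$; the paper instead collapses this difference into the single term $\mathcal{S}\times_1(\Q-\underline{\smash{\Q}})\times_2\cdots\times_4(\Q-\underline{\smash{\Q}})$, which is not a valid multilinear identity (it omits the mixed terms), though the final bound is of the correct order either way. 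Net effect: your route proves a slightly stronger covering bound with a cleaner key estimate.
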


\begin{proof}
Recall that $\mathcal{Z}\in\Bar{\Bar{\mathfrak{S}}}_\mathcal{H}$ has the decomposition $\mathcal{Z}=\mathcal{S}\times_1\mathbf{Q}\times_2\mathbf{Q}\times_3\mathbf{Q}\times_4\mathbf{Q}$ such that $\lVert\mathcal{Z}\rVert_F=1$ where $\mathcal{S}\in\mathfrak{D}$ and $\Q\in\text{O}(n)$. As the Frobenius norm is rotationally invariant then the following holds $\lVert\mathcal{Z}\rVert_F=\lVert\mathcal{S}\rVert_F=1$ for all $\mathcal{Z}\in\Bar{\Bar{\mathfrak{S}}}_\mathcal{H}$. Our argument constructs an $\epsilon$-net  for $\Bar{\Bar{\mathfrak{S}}}_\mathcal{H}$ by covering the sets $\mathfrak{D}$ and $\text{O}(n)$ respectively. As $\lVert\mathcal{Z}\rVert_F=1 \implies \lVert\mathcal{S}\rVert_F=1$, it is sufficient to consider $\bar{\bar{\mathfrak{D}}}\coloneqq\mathfrak{D}\cap \mathbb{S}^{n-1}$. Then we take $\underline{\smash{\bar{\bar{\mathfrak{D}}}}}$ to be an $\epsilon/2$- net for $\bar{\bar{\mathfrak{D}}}$. As $\bar{\bar{\mathfrak{D}}}$ is a $n$ dimensional subspace, then $$\text{CN}\left(\bar{\bar{\mathfrak{D}}},\lVert\cdot\rVert_F,\epsilon/2\right)\leq \Big(\frac{6}{\epsilon}\Big)^n.$$ Next, we cover the set of $n\times n$ orthogonal matrices denoted $\text{O}(n)$. We follow a similar argument to \cite{rauhut2017low,candes2011tight} by letting $\text{Q}(n):=\{\mathbf{X}\in\R^{n\times n} : \lVert\mathbf{X}\rVert_{1,2}\leq 1\}$, where 
\begin{equation*}
    \lVert\mathbf{X}\rVert_{1,2} = \max_i \lVert X(:,i)\rVert_{2}
\end{equation*}
is the maximum column norm of a matrix $\mathbf{X}$. It is straightforward to see that $\text{O}(n)\subset\text{Q}(n)$ since the columns of an orthogonal matrix are unit normed. It can be seen in \cite{candes2011tight} that an $\epsilon/2$-net $\text{O}(n)$, denoted by $\underline{\smash{\text{O}}}(n)$, has a covering number 
$$\text{CN}(\text{O}(n),\lVert\cdot\rVert_{1,2},\epsilon/2)\leq \Big(\frac{6}{\epsilon}\Big)^{n^2}.$$
Now let $\underline{\smash{\Bar{\Bar{\mathfrak{S}}}}}_{\mathcal{H}}:=\{\underline{\smash{\mathcal{S}}}\times_1\underline{\smash{\Q}}\times_2\underline{\smash{\Q}}\times_3\underline{\smash{\Q}}\times_4\underline{\smash{\Q}}:\underline{\smash{\mathcal{S}}}\in \underline{\smash{\bar{\bar{\mathfrak{D}}}}},\underline{\smash{\Q}}\in\underline{\smash{\text{O}}}(n)\}$, and remark that\\
\begin{align*}
    \text{CN}\big(\bar{\bar{\mathfrak{S}}}_\mathcal{H},\lVert\cdot\rVert_F,\epsilon\big)&\leq \text{CN}\big(\bar{\bar{\mathfrak{D}}},\lVert\cdot\rVert_F,\epsilon/2\big)\;\; \text{CN}\big(\text{O}(n),\lVert\cdot\rVert_{1,2},\epsilon/2\big)\\
    &\leq \Big(\frac{6}{\epsilon}\Big)^{n(n+1)}.
\end{align*}
It remains to show that for all $\mathcal{Z}\in\bar{\bar{\mathfrak{S}}}_\mathcal{H}$ there exists $\underline{\smash{\mathcal{Z}}}\in\underline{\smash{\bar{\bar{\mathfrak{S}}}}}_\mathcal{H}$ such that $\lVert\mathcal{Z}-\underline{\smash{\mathcal{Z}}}\rVert_F\leq\epsilon$.

\par Fix $\mathcal{Z}\in\bar{\bar{\mathfrak{S}}}_\mathcal{H}$ and note the decomposition $\mathcal{Z}=\mathcal{S}\times_1\mathbf{Q}\times_2\mathbf{Q}\times_3\mathbf{Q}\times_4\mathbf{Q}$. Then there exists $\underline{\smash{\mathcal{Z}}}=\underline{\smash{\mathcal{S}}}\times_1\underline{\smash{\Q}}\times_2\underline{\smash{\Q}}\times_3\underline{\smash{\Q}}\times_4\underline{\smash{\Q}}\in \underline{\smash{\Bar{\Bar{\mathfrak{S}}}}}_{\mathcal{H}}$ with $\underline{\smash{\mathcal{S}}}\in \underline{\smash{\bar{\bar{\mathfrak{D}}}}}$ and $\underline{\smash{\Q}}\in \underline{\smash{O}}(n)$ obeying $\lVert\mathcal{S}-\underline{\smash{\mathcal{S}}}\rVert_F\leq \epsilon/2$ and $\lVert\Q-\underline{\smash{\Q}}\rVert_{1,2}\leq\epsilon/2$. This gives

\begin{align*}
 \lVert \mathcal{Z}-\underline{\smash{\mathcal{Z}}}\rVert_F&=  \lVert \mathcal{S}\times_1\mathbf{Q}\times_2\mathbf{Q}\times_3\mathbf{Q}\times_4\mathbf{Q}- \underline{\smash{\mathcal{S}}}\times_1\underline{\smash{\mathbf{Q}}}\times_2\underline{\smash{\mathbf{Q}}}\times_3\underline{\smash{\mathbf{Q}}}\times_4\underline{\smash{\mathbf{Q}}}\rVert_F\\
 &=  \lVert \mathcal{S}\times_1\mathbf{Q}\times_2\mathbf{Q}\times_3\mathbf{Q}\times_4\mathbf{Q}+ (\mathcal{S}\times_1\underline{\smash{\mathbf{Q}}}\times_2\underline{\smash{\mathbf{Q}}}\times_3\underline{\smash{\mathbf{Q}}}\times_4\underline{\smash{\mathbf{Q}}}-\mathcal{S}\times_1\underline{\smash{\mathbf{Q}}}\times_2\underline{\smash{\mathbf{Q}}}\times_3\underline{\smash{\mathbf{Q}}}\times_4\underline{\smash{\mathbf{Q}}})\\
 &\qquad\qquad\qquad\qquad\qquad\qquad- \underline{\smash{\mathcal{S}}}\times_1\underline{\smash{\mathbf{Q}}}\times_2\underline{\smash{\mathbf{Q}}}\times_3\underline{\smash{\mathbf{Q}}}\times_4\underline{\smash{\mathbf{Q}}}\rVert_F\\
 &= \lVert \mathcal{S}\times_1(\mathbf{Q}-\underline{\smash{\mathbf{Q}}})\times_2(\mathbf{Q}-\underline{\smash{\mathbf{Q}}})\times_3(\mathbf{Q}-\underline{\smash{\mathbf{Q}}})\times_4(\mathbf{Q}-\underline{\smash{\mathbf{Q}}})+(\mathcal{S}- \underline{\smash{\mathcal{S}}})\times_1\underline{\smash{\mathbf{Q}}}\times_2\underline{\smash{\mathbf{Q}}}\times_3\underline{\smash{\mathbf{Q}}}\times_4\underline{\smash{\mathbf{Q}}}\rVert_F\\
 &\leq\lVert \mathcal{S}\times_1(\mathbf{Q}-\underline{\smash{\mathbf{Q}}})\times_2(\mathbf{Q}-\underline{\smash{\mathbf{Q}}})\times_3(\mathbf{Q}-\underline{\smash{\mathbf{Q}}})\times_4(\mathbf{Q}-\underline{\smash{\mathbf{Q}}})\rVert_F+\lVert(\mathcal{S}- \underline{\smash{\mathcal{S}}})\times_1\underline{\smash{\mathbf{Q}}}\times_2\underline{\smash{\mathbf{Q}}}\times_3\underline{\smash{\mathbf{Q}}}\times_4\underline{\smash{\mathbf{Q}}}\rVert_F
\end{align*}
The first part of the last line gives
\begin{align*}
\lVert \mathcal{S}\times_1(\mathbf{Q}-\underline{\smash{\mathbf{Q}}})\times_2\dots\times_4(\mathbf{Q}-\underline{\smash{\mathbf{Q}}})\rVert_F&=\lVert\text{vec}\left ( \mathcal{S}\times_1(\mathbf{Q}-\underline{\smash{\mathbf{Q}}})\times_2(\mathbf{Q}-\underline{\smash{\mathbf{Q}}})\times_3(\mathbf{Q}-\underline{\smash{\mathbf{Q}}})\times_4(\mathbf{Q}-\underline{\smash{\mathbf{Q}}}) \right )\rVert_2\\
&=\lVert (\mathbf{Q}-\underline{\smash{\mathbf{Q}}})\otimes(\mathbf{Q}-\underline{\smash{\mathbf{Q}}})\otimes (\mathbf{Q}-\underline{\smash{\mathbf{Q}}})\otimes (\mathbf{Q}-\underline{\smash{\mathbf{Q}}})\text{vec}(\mathcal{S})  \rVert_2\\
&\leq \lVert(\mathbf{Q}-\underline{\smash{\mathbf{Q}}})\otimes (\mathbf{Q}-\underline{\smash{\mathbf{Q}}})\otimes (\mathbf{Q}-\underline{\smash{\mathbf{Q}}})\otimes (\mathbf{Q}-\underline{\smash{\mathbf{Q}}}) \rVert_2\lVert\mathcal{S}\rVert_F\\
&= \lVert (\mathbf{Q}-\underline{\smash{\mathbf{Q}}})\rVert^4_2\\
&\leq \lVert (\mathbf{Q}-\underline{\smash{\mathbf{Q}}})\rVert^4_{1,2}\\
&\leq(\epsilon/2)^4\\
&\leq \epsilon/2
\end{align*}
\noindent From line 1 to 2, the identity on pages [477-478] of \cite{TamaraG.Kolda2009} was used. From line 2 to 3 we have used the Cauchy-Schwarz inequality, from line 3 to 4 we have used the equality $\lVert \mathbf{A}\otimes\mathbf{B}\rVert=\lVert\mathbf{A}\rVert\lVert\mathbf{B}\rVert$ and from line 4 to 5 we have used the identity in \cite{rauhut2017low}.
Finally, notice that as $\mathbf{Q}$ is orthogonal
\begin{equation*}
 \lVert(\mathcal{S}- \underline{\smash{\mathcal{S}}})\times_1\underline{\smash{\mathbf{Q}}}\times_2\underline{\smash{\mathbf{Q}}}\times_3\underline{\smash{\mathbf{Q}}}\times_4\underline{\smash{\mathbf{Q}}}\rVert_F=\lVert(\mathcal{S}- \underline{\smash{\mathcal{S}}})\rVert_F=\epsilon/2.
\end{equation*}
Therefore 
\begin{equation*}
    \lVert \mathcal{Z}-\underline{\smash{\mathcal{Z}}}\rVert_F\leq \epsilon/2+\epsilon/2=\epsilon
\end{equation*}
\end{proof}

\par Continuing, we let $\Omega\coloneqq O(n)\times\mathfrak{D}$ define the product set between the set of $n\times n$ orthogonal matrices $O(n)$ and the set of super symmetric cumulant tensors defined in (\ref{eqn: set of diagonal tensors}) and define the map $f:\Omega\mapsto\mathfrak{S}_\mathcal{H}$ by
\begin{equation}
\label{Eqn: model set map}
    f(u)=\mathcal{S}\times_1\mathbf{Q}\times_2\mathbf{Q}\times_3\mathbf{Q}\times_4\mathbf{Q},
\end{equation}
for all $u\coloneqq\left(\mathbf{Q},\mathcal{S}\right)\in\Omega$. Let $\mathcal{Z}=f(u)$ be the tensor corresponding to the image of the map $f$. It is insightful to decompose the normalised secant set $\mathfrak{N}\left(\mathfrak{S}_\mathcal{H}-\mathfrak{S}_\mathcal{H}\right)$ into the set of long and short secants parametrised by some distance $\eta$ \cite{clarkson2008tighter}. The set of long secants of $\mathfrak{S}_\mathcal{H}$ is defined as 
\begin{equation}
\label{Eqn: Set of Long secants}
    \mathfrak{N}_\eta\left(\mathfrak{S}_\mathcal{H}-\mathfrak{S}_\mathcal{H}\right)\coloneqq\left\{\frac{\mathcal{Z}_1-\mathcal{Z}_2}{\left\Vert\mathcal{Z}_1-\mathcal{Z}_2\right\Vert_F}\Bigm| \mathcal{Z}_1,\mathcal{Z}_2\in\mathfrak{S}_\mathcal{H},\,\left\Vert\mathcal{Z}_1-\mathcal{Z}_2\right\Vert_F>\eta \right\}.
\end{equation}
\noindent Furthermore, the set of short secants $\mathfrak{N}_\eta^c\left(\mathfrak{S}_\mathcal{H}-\mathfrak{S}_\mathcal{H}\right)=\mathfrak{N}\left(\mathfrak{S}_\mathcal{H}-\mathfrak{S}_\mathcal{H}\right)\setminus \mathfrak{N}_\eta\left(\mathfrak{S}_\mathcal{H}-\mathfrak{S}_\mathcal{H}\right)$ is the complement to the set of long secants defined by 
\begin{equation}
\label{Eqn: Set of Short secants}
    \mathfrak{N}^c_\eta\left(\mathfrak{S}_\mathcal{H}-\mathfrak{S}_\mathcal{H}\right)\coloneqq\left\{\frac{\mathcal{Z}_1-\mathcal{Z}_2}{\left\Vert\mathcal{Z}_1-\mathcal{Z}_2\right\Vert_F}\Bigm| \mathcal{Z}_1\neq\mathcal{Z}_2\in\mathfrak{S}_\mathcal{H},\,\left\Vert\mathcal{Z}_1-\mathcal{Z}_2\right\Vert_F\leq\eta \right\}.
\end{equation}

\begin{remark}
As the model set $\mathfrak{S}_\mathcal{H}$ is conic (see supplementary material), it is sufficient to cover the normalised secant set of $\Bar{\mathfrak{S}}_\mathcal{H}\coloneqq\mathfrak{S}_\mathcal{H}\cap \mathfrak{B}_1(0)$, where $\mathfrak{B}_1(0)$ denotes the unit Frobenius ball centred at 0, since we have $ \mathfrak{N}\left(\mathfrak{S}_\mathcal{H}-\mathfrak{S}_\mathcal{H}\right)= \mathfrak{N}\left(\Bar{\mathfrak{S}}_\mathcal{H}-\Bar{\mathfrak{S}}_\mathcal{H}\right)$. 
\end{remark}

\par As a result we can decompose the normalised secant set as follows
\begin{align}
\label{Eqn: Normlised Secant Set Decomposition}
\begin{split}
\mathfrak{N}\left(\mathfrak{S}_\mathcal{H}-\mathfrak{S}_\mathcal{H}\right)&= \mathfrak{N}\left(\Bar{\mathfrak{S}}_\mathcal{H}-\Bar{\mathfrak{S}}_\mathcal{H}\right)\\
& = \mathfrak{N}_\eta\left(\Bar{\mathfrak{S}}_\mathcal{H}-\Bar{\mathfrak{S}}_\mathcal{H}\right) \cup \mathfrak{N}_\eta^c\left(\Bar{\mathfrak{S}}_\mathcal{H}-\Bar{\mathfrak{S}}_\mathcal{H}\right)\\ 
& \subseteq \mathfrak{N}_\eta\left(\Bar{\mathfrak{S}}_\mathcal{H}-\Bar{\mathfrak{S}}_\mathcal{H}\right) \cup \mathfrak{N}_\eta^c\left(\Bar{\mathfrak{S}}_\mathcal{H}-\Bar{\Bar{\mathfrak{S}}}_\mathcal{H}\right),\\ 
\end{split}
\end{align}
We begin by covering the set of long secants  $\mathfrak{N}_\eta\left(\Bar{\mathfrak{S}}_\mathcal{H}-\Bar{\mathfrak{S}}_\mathcal{H}\right)$.

\begin{lemma}[Long Secants Covering Number]
Let $\underline{\smash{{\bar{\mathfrak{S}}}}}_\mathcal{H}$ be an $\epsilon\gamma$-cover for $\Bar{\mathfrak{S}}_\mathcal{H}$. Then $\mathfrak{N}\left(\underline{\smash{\bar{\mathfrak{S}}}}_\mathcal{H}-\underline{\smash{\bar{\mathfrak{S}}}}_\mathcal{H}\right)$ is an $\epsilon$-cover for $\mathfrak{N}_{4\gamma}\left(\Bar{\mathfrak{S}}_\mathcal{H}-\Bar{\mathfrak{S}}_\mathcal{H}\right)$ with associated covering number of 
\begin{equation}
    \text{CN}\left(\mathfrak{N}_{4\gamma}\left(\Bar{\mathfrak{S}}_\mathcal{H}-\Bar{\mathfrak{S}}_\mathcal{H}\right),\epsilon\gamma\right)\leq\left(\frac{6}{\epsilon\gamma}\right)^{2n(n+1)}.
\end{equation}
\end{lemma}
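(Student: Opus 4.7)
My plan is to first promote Lemma A.1 from the unit sphere $\bar{\bar{\mathfrak{S}}}_\mathcal{H}$ to the unit ball $\bar{\mathfrak{S}}_\mathcal{H}$, and then apply the classical ``direction-stability'' trick to build the secant cover. Specifically, I would repeat the product-set argument of Lemma A.1 almost verbatim: replace the sphere-intersected diagonal set by $\mathfrak{D}\cap\mathfrak{B}_1(0)$, which still lies in an $n$-dimensional subspace and therefore admits a standard $(6/(\epsilon\gamma))^{n}$ cover at scale $\epsilon\gamma/2$, and keep the $(6/(\epsilon\gamma))^{n^{2}}$ cover of $\text{O}(n)$ in $\|\cdot\|_{1,2}$. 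The factorisation bookkeeping at the end of Lemma A.1 only used $\|\mathcal{S}\|_F\leq 1$, never the equality $\|\mathcal{S}\|_F=1$, so it transports without modification and yields
\begin{equation*}
\text{CN}\bigl(\bar{\mathfrak{S}}_\mathcal{H},\|\cdot\|_F,\epsilon\gamma\bigr)\leq\left(\frac{6}{\epsilon\gamma}\right)^{n(n+1)}.
\end{equation*}

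Next, I would form the candidate normalised secant cover $\mathfrak{N}\bigl(\underline{\bar{\mathfrak{S}}}_\mathcal{H}-\underline{\bar{\mathfrak{S}}}_\mathcal{H}\bigr)$ by taking all unit-normalised differences of distinct pairs from $\underline{\bar{\mathfrak{S}}}_\mathcal{H}$. Its cardinality is immediately bounded by the square of the covering number above, which delivers the claimed bound $(6/(\epsilon\gamma))^{2n(n+1)}$.

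The heart of the proof is then showing that this collection is actually an $\epsilon$-net for the long-secant set $\mathfrak{N}_{4\gamma}\bigl(\bar{\mathfrak{S}}_\mathcal{H}-\bar{\mathfrak{S}}_\mathcal{H}\bigr)$. Fix an arbitrary long secant, write it as $a/\|a\|_F$ with $a=\mathcal{Z}_1-\mathcal{Z}_2$ and $\|a\|_F>4\gamma$, and pick $\underline{\mathcal{Z}}_i\in\underline{\bar{\mathfrak{S}}}_\mathcal{H}$ with $\|\mathcal{Z}_i-\underline{\mathcal{Z}}_i\|_F\leq\epsilon\gamma$. Setting $b=\underline{\mathcal{Z}}_1-\underline{\mathcal{Z}}_2$, the triangle inequality gives $\|a-b\|_F\leq 2\epsilon\gamma$. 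A short add-and-subtract computation,
\begin{equation*}
\frac{a}{\|a\|_F}-\frac{b}{\|b\|_F}=\frac{(a-b)\|b\|_F+b(\|b\|_F-\|a\|_F)}{\|a\|_F\|b\|_F},
\end{equation*}
bounded with the reverse triangle inequality, yields the standard direction-stability estimate $\bigl\|a/\|a\|_F-b/\|b\|_F\bigr\|_F\leq 2\|a-b\|_F/\|a\|_F$. Combining with the long-secant hypothesis and the cover approximation gives $2\cdot 2\epsilon\gamma/(4\gamma)=\epsilon$, as required. A sanity check on $b$: $\|b\|_F\geq\|a\|_F-\|a-b\|_F>4\gamma-2\epsilon\gamma\geq 2\gamma>0$ for $\epsilon\leq 1$, so the normalisation is well-defined and $b$ does correspond to a distinct pair in the cover.

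The one place I expect to have to exercise some care is justifying the threshold $4\gamma$: it is a non-optional feature, being exactly large enough so that the denominator $\|a\|_F$ absorbs the accumulated perturbation $2\epsilon\gamma$ while still collapsing the constant to $\epsilon$. A weaker threshold would degrade the constant in the final covering bound and a stronger one would shrink the long-secant set being covered here, leaving more work for the short-secant argument that presumably follows. Beyond this quantitative step, no new idea is needed: all the work is careful bookkeeping of what has already been done in Lemma A.1 plus the elementary ratio estimate above.
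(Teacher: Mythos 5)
Your proposal is correct and follows essentially the same route as the paper: the paper simply cites Lemma 4.1 of Clarkson for the direction-stability step (that an $\epsilon\gamma$-cover of $\bar{\mathfrak{S}}_\mathcal{H}$ induces, via pairwise normalised differences, an $\epsilon$-cover of the $4\gamma$-long secants) and then squares the covering number from Lemma A.1. Your version just makes that cited step explicit — the ratio estimate $\lVert a/\lVert a\rVert_F - b/\lVert b\rVert_F\rVert_F\leq 2\lVert a-b\rVert_F/\lVert a\rVert_F$ together with $\lVert a-b\rVert_F\leq 2\epsilon\gamma$ and $\lVert a\rVert_F>4\gamma$ — and correctly notes that the Lemma A.1 argument transfers from the sphere to the ball.
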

\begin{proof}
Lemma 4.1 in \cite{clarkson2008tighter} states that if $\underline{\smash{\bar{\mathfrak{S}}}}_\mathcal{H}$ is a generalised $\epsilon\gamma$- cover of $\Bar{\mathfrak{S}}_\mathcal{H}$, then $\mathfrak{N}\left(\underline{\smash{\bar{\mathfrak{S}}}}_\mathcal{H}-\underline{\smash{\bar{\mathfrak{S}}}}_\mathcal{H}\right)$ is a generalised $\epsilon$-cover for $\mathfrak{N}_{4\gamma}\left(\Bar{\mathfrak{S}}_\mathcal{H}-\Bar{\mathfrak{S}}_\mathcal{H}\right)$. Using the covering number of $\bar{\bar{\mathfrak{S}}}_\mathcal{H}$ from Lemma \ref{App: covering number model set lemma} we get the result.
\end{proof}

\par Continuing, we cover the set of short secants. We begin by stating some preliminary lemmas.
\begin{lemma}[Taylor Approximation Error]
\label{App: Taylor approx lemma}
Let $f:\Omega\mapsto \mathfrak{S}_\mathcal{H}$ be defined as in (\ref{Eqn: model set map}) and let $Df_{u}$ define the first order differential of $f$ evaluated at the point $u$. Further assume that $\lVert\mathcal{S}\rVert_F\leq R$. Then $\forall u,u'\in\Omega$, $\lVert u-u'\rVert\leq2\epsilon_0$, we have
\begin{equation}
\label{Eqn: Taylor Lemma}
    \left\Vert f(u)-f(u')-Df_{u'}^T(u-u')\right\Vert_F\leq C_1\left\Vert u-u'\right\Vert_2^2,
\end{equation}
where $C_1=n^2(n+1)^2\max\left\{3R,1\right\}$
\end{lemma}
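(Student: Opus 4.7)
The plan is to exploit the polynomial (in fact multilinear-in-each-factor) structure of $f$. Setting $u = u' + \Delta u$ with $\Delta u = (\Delta\mathbf{Q}, \Delta\mathcal{S})$, we have $\|\Delta\mathbf{Q}\|_F^2 + \|\Delta\mathcal{S}\|_F^2 = \|u-u'\|_2^2$. Because $f$ is linear in $\mathcal{S}$ and multilinear in the four copies of $\mathbf{Q}$, expanding
\[
f(u)=\bigl(\mathcal{S}'+\Delta\mathcal{S}\bigr)\times_1(\mathbf{Q}'+\Delta\mathbf{Q})\times_2(\mathbf{Q}'+\Delta\mathbf{Q})\times_3(\mathbf{Q}'+\Delta\mathbf{Q})\times_4(\mathbf{Q}'+\Delta\mathbf{Q})
\]
produces $2^5=32$ terms, indexed by subsets of the five slots telling us which factor is the increment. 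The empty subset gives $f(u')$, the five singletons sum to $Df_{u'}^T(u-u')$ (one contribution from $\Delta\mathcal{S}$, four from placing $\Delta\mathbf{Q}$ in each mode), and the remaining $26$ terms form the remainder $R(u,u')$ that must be bounded.

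To bound each remainder term I will use the standard inequality
\[
\lVert\mathcal{T}\times_1\mathbf{A}_1\times_2\mathbf{A}_2\times_3\mathbf{A}_3\times_4\mathbf{A}_4\rVert_F \;\le\; \lVert\mathcal{T}\rVert_F\prod_{i=1}^{4}\lVert\mathbf{A}_i\rVert_2,
\]
which follows from the vectorisation identity $\vc(\mathcal{T}\times_1\mathbf{A}_1\cdots\times_4\mathbf{A}_4)=(\mathbf{A}_1\otimes\cdots\otimes\mathbf{A}_4)\vc(\mathcal{T})$ together with $\lVert\mathbf{A}\otimes\mathbf{B}\rVert_2=\lVert\mathbf{A}\rVert_2\lVert\mathbf{B}\rVert_2$. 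Since $\mathbf{Q}'\in\mathrm{O}(n)$ we have $\lVert\mathbf{Q}'\rVert_2=1$, and $\lVert\Delta\mathbf{Q}\rVert_2\le\lVert\Delta\mathbf{Q}\rVert_F\le\lVert u-u'\rVert_2$, $\lVert\Delta\mathcal{S}\rVert_F\le\lVert u-u'\rVert_2$, while $\lVert\mathcal{S}'\rVert_F\le R$ by hypothesis. Thus a remainder term containing $k\in\{2,3,4\}$ copies of $\Delta\mathbf{Q}$ together with $\mathcal{S}'$ is bounded by $R\lVert u-u'\rVert_2^{k}$, and one containing $k\in\{1,2,3,4\}$ copies of $\Delta\mathbf{Q}$ together with $\Delta\mathcal{S}$ is bounded by $\lVert u-u'\rVert_2^{k+1}$.

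Counting, the number of remainder terms of order $j$ in $\lVert u-u'\rVert_2$ is $\binom{4}{j}+\binom{4}{j-1}$ for $j=2,3,4$ plus a single quintic term, yielding an upper bound of the form $(aR+b)\lVert u-u'\rVert_2^2+(\text{higher order})$. Since we assume $\lVert u-u'\rVert_2\le 2\epsilon_0$ (and in particular at most $1$, which the global $\epsilon_0$ can always be chosen to ensure), the higher powers $\lVert u-u'\rVert_2^j$, $j\ge 3$, are absorbed into the quadratic term. Combining everything gives a bound of the form $C_1\lVert u-u'\rVert_2^2$. The dimensional prefactor $n^2(n+1)^2$ in the stated $C_1$ arises by invoking the coarser bound $\lVert\Delta\mathbf{Q}\rVert_F\le n\lVert\Delta\mathbf{Q}\rVert_{1,2}$ (consistent with the covering argument of Lemma~\ref{App: covering number model set lemma}) and by grouping the tensor-mode cross-terms, and the factor $\max\{3R,1\}$ captures the split between terms multiplied by $\mathcal{S}'$ and those multiplied by $\Delta\mathcal{S}$.

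The main obstacle is not the expansion itself, which is routine, but matching exactly the constant $C_1=n^2(n+1)^2\max\{3R,1\}$: it requires consistently using the same matrix norm on $\Delta\mathbf{Q}$ as is used in the covering number lemma (the column-wise $\lVert\cdot\rVert_{1,2}$ norm rather than the spectral or Frobenius norm), which is what introduces the dimension factor, and then carefully collecting the $\binom{4}{k}$ combinatorial contributions at each order.
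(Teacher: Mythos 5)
Your argument is correct, but it follows a genuinely different route from the paper. The paper vectorises $f$, applies Taylor's theorem with a Lagrange-type remainder, and bounds the Hessian quadratic form crudely by $T^2\max_{i,j}\Vert\partial^2\tilde f/\partial u_i\partial u_j(\xi)\Vert_2$ with $T=n(n+1)/2$ the number of free parameters; the explicit second partials contribute at most $12\Vert\mathcal{S}\Vert_F$ (the $\mathbf{Q}$--$\mathbf{Q}$ block), $4$ (the $\mathbf{Q}$--$\mathcal{S}$ block) and $0$, which is exactly where $C_1=T^2\max\{12R,4\}=n^2(n+1)^2\max\{3R,1\}$ comes from. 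You instead expand the multilinear map directly into $32$ terms, identify the constant and linear parts, and bound the $26$ higher-order terms via $\Vert\mathcal{T}\times_1\mathbf{A}_1\cdots\times_4\mathbf{A}_4\Vert_F\le\Vert\mathcal{T}\Vert_F\prod_i\Vert\mathbf{A}_i\Vert_2$ with $\Vert\mathbf{Q}'\Vert_2=1$. Your route is more elementary and actually yields a \emph{dimension-free} constant of order $\max\{R,1\}$, which implies the stated bound for all $n\ge 2$ since $n^2(n+1)^2\max\{3R,1\}\ge 36\max\{R,1\}$ there; your attempt to reverse-engineer the factor $n^2(n+1)^2$ via $\Vert\Delta\mathbf{Q}\Vert_F\le n\Vert\Delta\mathbf{Q}\Vert_{1,2}$ is unnecessary and does not reflect how that factor actually arises in the paper (it is the $T^2$ from the entrywise Hessian bound, not a norm conversion). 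Two small points to tidy: (i) absorbing the cubic-and-higher terms into the quadratic one requires $\Vert u-u'\Vert_2\le 1$, i.e.\ $2\epsilon_0\le 1$; this is harmless in the covering argument where $\epsilon_0$ is small, and can be avoided entirely by noting $\Vert\Delta\mathbf{Q}\Vert_2\le 2$ and $\Vert\Delta\mathcal{S}\Vert_F\le 2R$ on $\Omega$ so that $\Vert\Delta\mathbf{Q}\Vert_2^{k}\le 2^{k-2}\Vert\Delta\mathbf{Q}\Vert_2^{2}$ uniformly (the paper's Hessian route sidesteps this by convexity of the segment $\xi$, on which $\Vert\mathbf{Q}_\xi\Vert_2\le1$ and $\Vert\mathcal{S}_\xi\Vert_F\le R$); (ii) you should state explicitly that the five singleton terms coincide with $Df_{u'}^{T}(u-u')$ because $f$ is affine in $\mathcal{S}$ and multilinear in the four $\mathbf{Q}$ slots, which is what licenses reading the first-order part off the expansion.
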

\begin{proof}
 w.l.o.g consider the vectorized function $\tilde{f}(u)\coloneqq \vc(f(u))$ such that
\begin{align*}
    \tilde{f}(u) = & \text{vec}\left(\mathcal{S}\times_1\mathbf{Q}\times_2\mathbf{Q}\times_3\mathbf{Q}\times_4\mathbf{Q}\right)\\
    =& \mathbf{Q}\otimes\mathbf{Q}\otimes\mathbf{Q}\otimes\mathbf{Q}\text{ vec}\left(\mathcal{S}\right).
\end{align*}
Using Taylor's theorem \cite[p.~110]{Coleman_2012} of $\tilde{f}$ evaluated at the point $u'\in\Omega$, we get 
\begin{align}
\begin{split}
    \left\Vert \tilde{f}(u)-\tilde{f}(u')-D\tilde{f}_{u'}^T(u-u')\right\Vert_2&\leq \frac{1}{2}\left\Vert (u-u')^TH\tilde{f}_{\xi}(u-u') \right\Vert_2.
\end{split} 
\end{align}
\noindent where $D\tilde{f}_{u}$ and $H\tilde{f}_{u}$ denote the Jacobian and Hessian of $\tilde{f}$ evaluated at $u$ and $\xi=\lambda u +(1-\lambda)u'\in\Omega$, for $\lambda\in(0,1)$, denotes a point on the line segment between $u$ and $u'$. For shorthand let $h=u-u'$, and denote the integer $T\coloneqq \frac{n(n+1)}{2}$, we then have 
\begin{align*}
 \left\Vert  h^T H\tilde{f}_{\xi} h \right\Vert_2 & = \left\Vert\sum_{i=1}^T\sum_{j=1}^T h_i h_j \dfrac{\partial^2\tilde{f}}{\partial u_i\partial u_j}(\xi) \right\Vert_2\\
 & \leq T^2 \max_{i,j}\left\Vert  h_i h_j \dfrac{\partial^2\tilde{f}}{\partial u_i\partial u_j}(\xi) \right\Vert_2\\
 & \leq T^2\left(\max_i\left\vert h_i\right\vert\right)^2 \max_{i,j}\left\Vert\dfrac{\partial^2\tilde{f}}{\partial u_i\partial u_j}(\xi) \right\Vert_2\\
 & = T^2 \lVert h \rVert^2_\infty  \max_{i,j}\left\Vert\dfrac{\partial^2\tilde{f}}{\partial u_i\partial u_j}(\xi) \right\Vert_2\\
  & \leq T^2 \lVert h \rVert^2_2  \max_{i,j}\left\Vert\dfrac{\partial^2\tilde{f}}{\partial u_i\partial u_j}(\xi) \right\Vert_2,\\
\end{align*}
where $h_i=(u_i-u'_i)$. 
w.l.o.g let $\xi=\left(\Q,\mathcal{S}\right)$, we have that
\begin{equation*}
  \max_{i,j}\left\Vert\dfrac{\partial^2\tilde{f}}{\partial u_i\partial u_j}(\xi) \right\Vert_2 = \max\left\{\overset{\circled{1}}{\max_{i,j,k,\ell}\left\Vert\dfrac{\partial^2\tilde{f}}{\partial \mathbf{Q}_{ij}\partial \mathbf{Q}_{kl}}(\xi)\right\Vert_2},     \overset{\circled{2}}{\max_{i,j,k}\left\Vert  \dfrac{\partial^2\tilde{f}}{\partial \mathbf{Q}_{ij}\partial \mathcal{S}_{kkkk}}(\xi) \right\Vert_2},   \overset{\circled{3}}{\max_{i,j}\left\Vert   \dfrac{\partial^2\tilde{f}}{\partial \mathcal{S}_{iiii}\partial \mathcal{S}_{jjjj}}(\xi)\right\Vert_2}\right\}   
\end{equation*}

\begin{enumerate}[label=\protect\circled{\arabic*}]
\item It can be seen that (see supplementary material) 
\begin{equation}
    \dfrac{\partial^2\tilde{f}}{\partial \Q_{ij}\partial \Q_{k\ell}}(\xi) = \Pi_{ijk\ell}\text{vec}\left(\mathcal{S}\right),
\end{equation}
where 
\begin{align*}
{\Pi}_{ijk\ell}
 =&\;\; {\mathbf{E}^{ij}}\otimes \mathbf{E}^{k\ell}\otimes \Q\otimes \Q \;+\; 
 {\mathbf{E}^{ij}}\otimes \Q\otimes \mathbf{E}^{k\ell}\otimes \Q\;+\;
 {\mathbf{E}^{ij}}\otimes \Q\otimes \Q\otimes \mathbf{E}^{k\ell} \\
 & \quad\;+\; \mathbf{E}^{k\ell}\otimes {\mathbf{E}^{ij}}\otimes \Q\otimes \Q\;+\;
 \Q\otimes {\mathbf{E}^{ij}}\otimes \mathbf{E}^{k\ell}\otimes \Q\;+\;
 \Q\otimes {\mathbf{E}^{ij}}\otimes \Q\otimes \mathbf{E}^{k\ell} \\
& \quad\;+\; \mathbf{E}^{k\ell}\otimes \Q\otimes {\mathbf{E}^{ij}}\otimes \Q
 \;+\; \Q\otimes \mathbf{E}^{k\ell}\otimes {\mathbf{E}^{ij}}\otimes \Q
 \;+\; \Q\otimes \Q\otimes {\mathbf{E}^{ij}}\otimes \mathbf{E}^{k\ell} \\
 &\quad\;+\; \mathbf{E}^{k\ell}\otimes \Q\otimes \Q\otimes {\mathbf{E}^{ij}} 
 \;+\; \Q\otimes \mathbf{E}^{k\ell}\otimes \Q\otimes {\mathbf{E}^{ij}} 
 \;+\; \Q\otimes \Q\otimes \mathbf{E}^{k\ell}\otimes {\mathbf{E}^{ij}} \\
\end{align*}
and the matrix $\mathbf{E}^{ij}=\mathbf{e}_i\mathbf{e}_j^T$, where $\mathbf{e}_i$ is the $i$th unit basis vector. Using the properties of the Kronecker product and the triangle inequality we get
\begin{align*}
   \left\Vert\dfrac{\partial^2\tilde{f}}{\partial \mathbf{Q}_{ij}\partial \mathbf{Q}_{kl}}(\xi)\right\Vert_2 & \leq 12 \left\Vert \mathbf{E}^{ij}\right\Vert_2 \left\Vert \mathbf{E}^{kl}\right\Vert_2 \left\Vert \Q \right\Vert^2_2\left\Vert \mathcal{S}\right\Vert_F\\
   & = 12 \left\Vert \mathcal{S}\right\Vert_F. 
\end{align*}
Assuming that the diagonal tensor has bounded support $\lVert \mathcal{S}\rVert_2\leq R$, then it follows that  
\begin{equation}
    \max_{i,j,k,\ell}\left\Vert\dfrac{\partial^2\tilde{f}}{\partial \mathbf{Q}_{ij}\partial \mathbf{Q}_{kl}}(\xi)\right\Vert_2\leq 12 R.
\end{equation}
\item It can be seen that (see supplementary material) 
\begin{equation}
    \dfrac{\partial^2\tilde{f}}{\partial \Q_{ij}\partial \mathcal{S}_{kkkk}}(\xi) = \Gamma_{ij}\mathbf{e}_k,
\end{equation}
where and
\begin{align*}
    \Gamma_{ij}&= \mathbf{E}^{ij}\otimes \Q\otimes \Q\otimes \Q
\;+\; \mathbf{Q}\otimes \mathbf{E}^{ij}\otimes \Q\otimes \Q\\
&\quad+\; \Q\otimes \Q\otimes \mathbf{E}^{ij}\otimes \Q
\;+\;\Q\otimes \Q\otimes \Q\otimes \mathbf{E}^{ij}.\\
\end{align*}
Similarly to $\circled{1}$, we get
\begin{equation*}
 \max_{i,j,k}\left\Vert  \dfrac{\partial^2\tilde{f}}{\partial \mathbf{Q}_{ij}\partial \mathcal{S}_{kkkk}}(\xi) \right\Vert_2  \leq 4 
\end{equation*}
\item It can be easily shown that 
\begin{equation}
    \dfrac{\partial^2\tilde{f}}{\partial \mathcal{S}_{iiii}\partial \mathcal{S}_{jjjj}}(\xi)=\mathbf{0},
\end{equation}
therefore 
\begin{equation}
\max_{i,j}\left\Vert   \dfrac{\partial^2\tilde{f}}{\partial \mathcal{S}_{iiii}\partial \mathcal{S}_{jjjj}}(\xi)\right\Vert_2 = 0.    
\end{equation}
\end{enumerate}
It therefore follows that 
\begin{equation}
\max_{i,j}\left\Vert\dfrac{\partial^2\tilde{f}}{\partial u_i\partial u_j}(\xi) \right\Vert_2 = \max \left\{12R,4\right\},    
\end{equation}
and, 
\begin{equation}
\left\Vert \tilde{f}(u)-\tilde{f}(u')-D\tilde{f}_{u'}^T(u-u')\right\Vert_2\leq n^2(n+1)^2\max\left\{3R,1\right\}\left\Vert u-u'\right\Vert^2_2.
\end{equation}
\end{proof}

\begin{lemma}[Bounded Curvature]
\label{App: bounded curvature lemma}
Let $f:\Omega\mapsto \mathfrak{S}_\mathcal{H}$ be defined as in (\ref{Eqn: model set map}) and let $Df_{u}$ define the first order differential of $f$ evaluated at the point $u$. Further assume that $\lVert\mathcal{S}\rVert_F\leq R$. Then $\forall u,u'\in\Omega$, $\lVert u-u'\rVert\leq2\epsilon_0$, we have
\begin{equation}
\label{Eqn: curv lemma}
    \left\Vert Df_{u}-Df_{u'}\right\Vert_F\leq C_2\left\Vert u-u'\right\Vert_2,
\end{equation}
where $C_1=2C_2$
\end{lemma}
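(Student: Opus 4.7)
The plan is to reuse the block-wise Hessian bounds computed in cases $\circled{1}$, $\circled{2}$, $\circled{3}$ of the proof of Lemma~\ref{App: Taylor approx lemma} and deduce directly that the Jacobian $Df$ is Lipschitz on $\Omega$. Conceptually this is the ``first derivative'' analogue of the Taylor lemma just proved, so only book-keeping of constants is required; no new derivative computations of $f$ are needed.

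First I would pass to the vectorised map $\tilde f := \vc\circ f$ and apply the fundamental theorem of calculus to each entry of the Jacobian along the segment $t\mapsto u_t := u' + t(u-u')$, $t\in[0,1]$:
\begin{equation*}
(D\tilde f_u)_{ij} - (D\tilde f_{u'})_{ij}
  \;=\; \sum_{k=1}^{T} (u-u')_k \int_0^1 \frac{\partial^2 \tilde f_i}{\partial u_j\, \partial u_k}(u_t)\, dt ,
\end{equation*}
where $T = \dim(\Omega)$. Squaring, summing over $(i,j)$, passing the Frobenius norm inside the integral, and applying Cauchy--Schwarz in the $k$ summation would give
\begin{equation*}
\lVert D\tilde f_u - D\tilde f_{u'}\rVert_F
\;\le\; \lVert u-u'\rVert_2 \cdot \sup_{\xi\in\Omega} \Bigl(\sum_{j,k} \bigl\lVert \partial^2 \tilde f/\partial u_j\, \partial u_k(\xi) \bigr\rVert_2^{\,2}\Bigr)^{1/2} .
\end{equation*}

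Next I would plug in the three Hessian blocks already bounded in the proof of Lemma~\ref{App: Taylor approx lemma}: the $\mathbf{Q}$--$\mathbf{Q}$ block has spectral norm at most $12R$, the $\mathbf{Q}$--$\mathcal S$ block at most $4$, and the $\mathcal S$--$\mathcal S$ block vanishes. Since there are $O(T^2)$ mixed partials, each uniformly controlled by $4\max\{3R,1\}$, taking a square root produces a Lipschitz constant of the form $C_2 \propto T\cdot \max\{3R,1\}$; the numerical factors can then be tracked to align with the announced relation $C_1 = 2C_2$.

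The main obstacle is purely book-keeping: one has to match the Frobenius norm of the Jacobian (a tall matrix whose rows are indexed by the $n^4$ tensor entries and whose columns are indexed by the $T = n(n+1)/2$ parameters in $u$) against the spectral-norm bounds on the individual blocks computed in Lemma~\ref{App: Taylor approx lemma}, while keeping the dimensional factor correctly propagated so that the two constants $C_1$ and $C_2$ line up. All the analytic content has already been established in the preceding lemma; the argument here reduces to an integration-plus-triangle-inequality exercise built on top of those estimates.
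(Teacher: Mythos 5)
Your proposal is correct and matches the paper's own argument: the paper likewise bounds $\lVert D\tilde f_u - D\tilde f_{u'}\rVert$ by a mean-value-theorem estimate involving the Hessian and then reuses the block-wise second-derivative bounds ($12R$ for the $\Q$--$\Q$ block, $4$ for the $\Q$--$\mathcal S$ block, $0$ for the $\mathcal S$--$\mathcal S$ block) already established in Lemma~\ref{App: Taylor approx lemma}. Your use of the integral (fundamental-theorem-of-calculus) form rather than the mean value theorem is a negligible and arguably cleaner variant, and your constant-tracking concern is moot since any finite Lipschitz constant of the stated order suffices downstream (the paper itself is inconsistent on whether $C_1=2C_2$ or $C_2=2C_1$).
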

\begin{proof}
Using the mean value theorem \cite{Coleman_2012}, it can be shown that,
\begin{align}
    \left\Vert  D\tilde{f}_u \;-\; D\tilde{f}_{u'}\right\Vert_2 & \leq \left\Vert H\tilde{f}_{\xi}^T(u-u')\right\Vert_2
\end{align}
 for some $\xi=\lambda u +(1-\lambda)u'\in\Omega$, for $\lambda\in(0,1)$. Then using the same argument as in the proof of Lemma \ref{App: Taylor approx lemma}, it can easily shown that
 \begin{equation}
       \left\Vert  D\tilde{f}_u \;-\; D\tilde{f}_{u'}\right\Vert_2 \leq 2C_1\left\Vert u-u'\right\Vert_2,
 \end{equation}
giving $C_2=2C_1$.
\end{proof}

\begin{lemma}[Bounded Gradient]
\label{App: bounded gradient lemma}
Let $f:\Omega\mapsto \mathfrak{S}_\mathcal{H}$ be defined as in (\ref{Eqn: model set map}) and let $Df_{u}$ define the first order differential of $f$ evaluated at the point $u$. Further assume, as in (\ref{eqn: set of diagonal tensors}), that $\mathcal{S}_{iiii}\geq\epsilon_\mathcal{S} (> 0)$ $\forall i$.  Then $\forall u\in\Omega$
\begin{equation}
\label{Eqn: Bounded Gradient}
    \left\Vert Df_{u}^\dagger\right\Vert_F\leq C_3,
\end{equation}
where $C_3=2\epsilon_\mathcal{S}$
\end{lemma}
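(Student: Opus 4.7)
The plan is to show that the Jacobian $Df_u$ is bounded below on the tangent space at $u$, uniformly over $u\in\Omega$; the stated pseudoinverse bound then follows from $\|Df_u^\dagger\|\leq 1/\sigma_{\min}(Df_u)$. First I would parametrise tangent vectors to $\Omega=\text{O}(n)\times\mathfrak{D}$ at $u=(\Q,\mathcal{S})$ as pairs $(\dot{\Q},\dot{\mathcal{S}})$ with $\dot{\Q}=\Q A$ for a skew-symmetric $A\in\R^{n\times n}$ and $\dot{\mathcal{S}}\in\mathfrak{D}$. By the multilinear product rule applied to $f(u)$,
\begin{equation*}
Df_u(\dot{\Q},\dot{\mathcal{S}}) \;=\; \dot{\mathcal{S}}\times_1\Q\times_2\Q\times_3\Q\times_4\Q \;+\; \sum_{r=1}^4 \mathcal{S}\times_1\Q\cdots\times_r\dot{\Q}\cdots\times_4\Q.
\end{equation*}
Because the multilinear action of an orthogonal matrix is Frobenius-isometric, applying $\Q^T$ in every mode leaves norms unchanged and reduces the analysis of $\|Df_u(\dot{\Q},\dot{\mathcal{S}})\|_F$ to that of
\begin{equation*}
\dot{\mathcal{S}} \;+\; \sum_{r=1}^4 \mathcal{S}\times_r A.
\end{equation*}

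The crux is the observation that, since $\mathcal{S}$ is diagonal, a short index calculation gives $(\mathcal{S}\times_1 A)_{i_1 i_2 i_3 i_4}=A_{i_1 i_2}\mathcal{S}_{i_2 i_2 i_2 i_2}\mathbf{1}[i_2=i_3=i_4]$, and analogously for the other modes. In particular the diagonal $(i,i,i,i)$-entries of $\sum_r \mathcal{S}\times_r A$ equal $4A_{ii}\mathcal{S}_{iiii}=0$ by skew-symmetry of $A$, so $\sum_r \mathcal{S}\times_r A$ is supported off the diagonal and is therefore Frobenius-orthogonal to the diagonal tensor $\dot{\mathcal{S}}$. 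This yields the Pythagorean split
\begin{equation*}
\|Df_u(\dot{\Q},\dot{\mathcal{S}})\|_F^2 \;=\; \|\dot{\mathcal{S}}\|_F^2 \;+\; \Big\|\sum_{r=1}^4 \mathcal{S}\times_r A\Big\|_F^2.
\end{equation*}

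Enumerating nonzero contributions, for each ordered pair $i\neq j$ exactly four positions (those with three copies of $j$ and one of $i$) receive the value $A_{ij}\mathcal{S}_{jjjj}$, so
\begin{equation*}
\Big\|\sum_{r=1}^4 \mathcal{S}\times_r A\Big\|_F^2 \;=\; 4\sum_{i\neq j} A_{ij}^2\,\mathcal{S}_{jjjj}^2 \;\geq\; 4\epsilon_\mathcal{S}^2\|A\|_F^2 \;=\; 4\epsilon_\mathcal{S}^2\|\dot{\Q}\|_F^2,
\end{equation*}
using the hypothesis $\mathcal{S}_{iiii}\geq\epsilon_\mathcal{S}$ and the orthogonality of $\Q$. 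Combining the two pieces gives $\|Df_u h\|_F\geq\min\{1,2\epsilon_\mathcal{S}\}\,\|h\|$ for every tangent vector $h$, and hence the pseudoinverse has operator norm at most $1/\min\{1,2\epsilon_\mathcal{S}\}$, which is the form of the bound claimed in the lemma (up to the precise form of $C_3$).

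The main obstacle is the index bookkeeping: verifying that the four mode-$r$ terms combine into a tensor supported only at positions with exactly three equal indices, which is what makes the diagonal and off-diagonal contributions orthogonal. Once this orthogonal decomposition is in place, the lower bound depends only on the uniform nondegeneracy $\mathcal{S}_{iiii}\geq\epsilon_\mathcal{S}$ and the orthogonality of $\Q\in\text{O}(n)$, so the constant is genuinely uniform over all $u\in\Omega$.
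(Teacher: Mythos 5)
Your proof is correct and its computational core coincides with the paper's: reduce to $\Q=\mathbf{I}_n$ by orthogonal invariance (the paper invokes equivariance of $f$ in $\Q$), observe that because $\mathcal{S}$ is diagonal each perturbation direction maps to a tensor supported at positions with exactly three equal indices, exploit the resulting disjoint supports to get a Pythagorean split, and lower bound via $\mathcal{S}_{iiii}\geq\epsilon_\mathcal{S}$. The one genuine difference is that you restrict from the outset to tangent vectors of $\Omega=O(n)\times\mathfrak{D}$, writing $\dot{\Q}=\Q A$ with $A$ skew-symmetric, whereas the paper minimises $\lVert D\tilde{f}(u)^T\Delta u\rVert_2$ over arbitrary $\Delta\Q\in\R^{n\times n}$. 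This is not cosmetic: for unconstrained $\Delta\Q$ the diagonal directions $\Delta\Q_{bb}$ produce image tensors of value $4\mathcal{S}_{bbbb}\Delta\Q_{bb}$ supported at $(b,b,b,b)$, which are \emph{not} orthogonal to the image of $\Delta\mathcal{S}$ (indeed, taking $\Delta\mathcal{S}_{bbbb}=-4\mathcal{S}_{bbbb}\Delta\Q_{bb}$ places a nonzero $\Delta u$ in the kernel of the unconstrained differential), so the paper's opening identity $\lVert D\tilde{f}(u)^T\Delta u\rVert_2^2=\lVert\partial_\Q\tilde{f}^T\Delta\Q\rVert^2+\lVert\partial_{\mathcal{S}}\tilde{f}^T\Delta\mathcal{S}\rVert^2$ and the uniform lower bound fail off the tangent space. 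Your observation that the diagonal entries equal $4A_{ii}\mathcal{S}_{iiii}=0$ by skew-symmetry is exactly what removes these contributions and makes the orthogonal split exact, so your version is the rigorous reading of the lemma (and is what the covering argument in Lemma \ref{App: short secants covering number lemma} actually needs, since there only secants within $\Omega$ arise). Both arguments share the same final quirk: what is proved is $\sigma_{\min}(Df_u)\geq\min\{1,2\epsilon_\mathcal{S}\}$ on the relevant subspace, so the pseudoinverse bound should read $\lVert Df_u^\dagger\rVert\leq 1/(2\epsilon_\mathcal{S})$ rather than $\leq 2\epsilon_\mathcal{S}$; you flag this correctly.
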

\begin{proof}
As in \ref{App: Taylor approx lemma}, we consider the vectorized function $\tilde{f}(u)\coloneqq \vc\left(f(u)\right)$ w.l.o.g. It can be seen that the 1st order differential (see supplementary material) has the following decomposition
\begin{equation}
    D\tilde{f}(u)= 
    \begin{bmatrix}
    \dfrac{\partial \tilde{f}}{\partial\Q}(u), \dfrac{\partial\tilde{f}}{\partial\mathcal{S}}(u)
    \end{bmatrix},
\end{equation}
where 
\begin{equation}
\label{eqn: partial diff Qij}
 \dfrac{\partial \tilde{f}}{\partial\Q_{ij}}(u)\; = \; \Gamma_{ij} \vc\left(\mathcal{S}\right). 
\end{equation}
Furthermore, the partial derivative with respect to the super symmetric cumulant tensor $\mathcal{S}$ is defined as 
\begin{align*}
 \dfrac{\partial\tilde{f}}{\partial\mathcal{S}}(u) \; = \; \mathbf{B}. 
\end{align*}
where $\mathbf{B}\coloneqq \Q\otimes\Q\otimes\Q\otimes \Q$. Equivalently, (\ref{Eqn: Bounded Gradient}) can be rewritten as 
\begin{equation}
    \label{eqn: rewrite assumption 3}
    \min_{\lVert\Delta u\rVert=1}\left\Vert D\tilde{f}\left(u\right)^T\Delta u\right\Vert_2 \geq C_3,
\end{equation}
where $\Delta u=\left(\Delta\mathbf{Q},\Delta\mathcal{S}\right)$. We therefore have 
\begin{align*}
    \left\Vert D\tilde{f}(u)^T \Delta u \right\Vert^2_2 & = \left\Vert\dfrac{\partial \tilde{f}}{\partial\mathbf{Q}}(u)^T \Delta\mathbf{Q} \right\Vert^2_F + \left\Vert \dfrac{\partial \tilde{f}}{\partial\mathcal{S}}(u)^T \Delta\mathcal{S} \right\Vert^2_2\\
    & = \sum_{i=1}^n\sum_{j=1}^n\left\Vert\dfrac{\partial \tilde{f}}{\partial\mathbf{Q}_{ij}}(u)^T \Delta\mathbf{Q}_{ij}\right\Vert^2_F +  \left\Vert \dfrac{\partial \tilde{f}}{\partial\mathcal{S}}(u)^T \Delta\mathcal{S} \right\Vert^2_2\\
    & = (\star).
\end{align*}
As $f$ is equivariant in $\mathbf{Q}$, we can set $\mathbf{Q}=\mathbf{I}_n$ w.l.o.g. As a result $\mathbf{B}=\mathbf{I}$ and $\Gamma_{ij}$ reduces to 

\begin{align*}
    \Gamma_{ij} \; & = \; \mathbf{E}^{ij} \otimes \mathbf{I}_n \otimes \mathbf{I}_n \otimes \mathbf{I}_n \; + \; \mathbf{I}_n\otimes \mathbf{E}^{ij} \otimes \mathbf{I}_n \otimes \mathbf{I}_n \\
    \; & \:\qquad +  \mathbf{I}_n \otimes \mathbf{I}_n \otimes \mathbf{E}^{ij} \otimes \mathbf{I}_n \; + \; \mathbf{I}_n\otimes \mathbf{I}_n \otimes \mathbf{I}_n\otimes \mathbf{E}^{ij}. 
\end{align*}

\noindent For shorthand, let $\mathcal{T}=\Gamma_{ab}\vc\left(\mathcal{S}\right)$ and noting that $\mathbf{E}^{ab}=\mathbf{e}_a\mathbf{e}_b^T$, we have

\begin{align*}
  \mathcal{T}_{ijk\ell}&=\sum_{p=1}^n \left(\mathbf{E}^{ab}_{ip}\mathbf{I}_{jp}\mathbf{I}_{kp}\mathbf{I}_{\ell p} + \mathbf{I}_{ip}\mathbf{E}^{ab}_{jp}\mathbf{I}_{kp}\mathbf{I}_{\ell p} + \mathbf{I}_{ip}\mathbf{I}_{jp}\mathbf{E}^{ab}_{kp}\mathbf{I}_{\ell p} +  \mathbf{I}_{ip}\mathbf{I}_{jp}\mathbf{I}_{kp} \mathbf{E}^{ab}_{\ell p}\right)\mathcal{S}_{pppp}\\
  &= \sum_{p=1}^n \left(\delta_{ai}\delta_{bp}\delta_{jp}\delta_{kp}\delta_{\ell p} + \delta_{ip}\delta_{aj}\delta_{bp}\delta_{kp}\delta_{\ell p} + \delta_{ip}\delta_{jp}\delta_{ak}\delta_{bp}\delta_{\ell p} +  \delta_{ip}\delta_{jp}\delta_{kp} \delta_{a\ell}\delta_{bp}\right)\mathcal{S}_{pppp}\\
  &= \sum_{p=1}^n \left(\delta_{ai}\delta_{jp}\delta_{kp}\delta_{\ell p} + \delta_{ip}\delta_{aj}\delta_{kp}\delta_{\ell p} + \delta_{ip}\delta_{jp}\delta_{ak}\delta_{\ell p} +  \delta_{ip}\delta_{jp}\delta_{kp} \delta_{a\ell}\right)\delta_{bp}\mathcal{S}_{pppp}\\
   &= \left(\delta_{ai}\delta_{jb}\delta_{kb}\delta_{\ell b} + \delta_{ib}\delta_{aj}\delta_{kb}\delta_{\ell b} + \delta_{ib}\delta_{jb}\delta_{ak}\delta_{\ell b} +  \delta_{ib}\delta_{jb}\delta_{kb} \delta_{a\ell}\right)\mathcal{S}_{bbbb}.
\end{align*}

As a result, we have that

\begin{align*}
    \left\Vert \Gamma^{ab}\vc\left(\mathcal{S}\right)\Delta\mathbf{Q}_{ab} \right\Vert_{F}^2 & =  \sum_{i,j,k,\ell=1}^n \left\vert\left(\delta_{ai}\delta_{jb}\delta_{kb}\delta_{\ell b} + \delta_{ib}\delta_{aj}\delta_{kb}\delta_{\ell b} + \delta_{ib}\delta_{jb}\delta_{ak}\delta_{\ell b} +  \delta_{ib}\delta_{jb}\delta_{kb} \delta_{a\ell}\right)\mathcal{S}_{bbbb}\Delta\mathbf{Q}_{ab}\right\vert^2.
\end{align*}
It can be easily shown that for $a=b$

\begin{align*}
    \left\Vert \Gamma^{bb}\vc\left(\mathcal{S}\right)\Delta\mathbf{Q}_{bb} \right\Vert_{F}^2 & = 16\left\vert \mathcal{S}_{bbbb}\Delta\mathbf{Q}_{bb} \right\vert^2,
\end{align*}
and for $a\neq b$

\begin{align*}
    \left\Vert \Gamma^{ab}\vc\left(\mathcal{S}\right)\Delta\mathbf{Q}_{ab} \right\Vert_{F}^2 & = 4\left\vert \mathcal{S}_{bbbb}\Delta\mathbf{Q}_{ab} \right\vert^2.
\end{align*}

\noindent We therefore have
\begin{align*}
    \left(\star\right) & = \sum_{i=j} \left\Vert\dfrac{\partial \tilde{f}}{\partial\mathbf{Q}_{ii}}(u)^T \Delta\mathbf{Q}_{ii}\right\Vert^2_F + \sum_{i\neq j} \left\Vert\dfrac{\partial \tilde{f}}{\partial\mathbf{Q}_{ij}}(u)^T \Delta\mathbf{Q}_{ij}\right\Vert^2_F +  \left\Vert  \Delta\mathcal{S} \right\Vert_2^2\\
    & = 16\sum_{i=j}\left\vert\mathcal{S}_{iiii}\right\vert^2\left\vert\Delta\mathbf{Q}_{ii}\right\vert^2  + 4\sum_{i\neq j}\left\vert\mathcal{S}_{iiii}\right\vert^2\left\vert\Delta\mathbf{Q}_{ij}\right\vert^2 + \left\Vert\Delta\mathcal{S}\right\Vert^2_2\\
     & \geq 4\sum_{i,j }\left\vert\mathcal{S}_{iiii}\right\vert^2\left\vert\Delta\mathbf{Q}_{ij}\right\vert^2 + \left\Vert\Delta\mathcal{S}\right\Vert^2_2\\
     & = \left(\star\right)
\end{align*}
Now assume that $\left\vert\mathcal{S}_{iiii}\right\vert\geq\epsilon_{\mathcal{S}}$ for all $i$, therefore
\begin{align*}
    \left(\star\right) & \geq 4\epsilon^2_{\mathcal{S}}\left\Vert\Delta\mathbf{Q}\right\Vert^2_F + \left\Vert\Delta\mathcal{S}\right\Vert^2_2\\
    & \geq 4\epsilon_{\mathcal{S}}^2\left\Vert\Delta u\right\Vert_2^2.
\end{align*}
In the last line, we assume w.l.o.g that $4\epsilon^2_\mathcal{S}\leq 1$. We have therefore proved that 
\begin{equation}
  \min_{\lVert\Delta u\rVert=1}\left\Vert D\tilde{f}\left(u\right)^T\Delta u\right\Vert_2 \geq 2\epsilon_\mathcal{S},  
\end{equation}
yielding $C_3\coloneqq2\epsilon_\mathcal{S}$.
\end{proof}

\par We have the following lemma to cover the set of short secants. 
\begin{lemma}[Short Secants Covering Number]
\label{App: short secants covering number lemma}
Let $\underline{\smash{\Omega'}}=\left\{u_i\right\}$ be an $\epsilon$- cover for $\Omega' = O(n)\times\left(\mathfrak{D}\cap\mathfrak{B}_1(0)\right)$ and considering the following:
\begin{enumerate}
    \item $\left\Vert f(u)-f(u')-Df_{u'}^T(u-u')\right\Vert\leq C_1\left\Vert u-u'\right\Vert^2\qquad$ (Taylor approximation Lemma \ref{App: Taylor approx lemma})
    \item $\left\Vert Df_{u}-Df_{u'}\right\Vert\leq C_2\left\Vert u-u'\right\Vert\qquad$ (bounded curvature Lemma \ref{App: bounded curvature lemma})  
    \item $\left\Vert Df_u^\dagger\right\Vert\leq C_3\qquad$ (bounded gradient Lemma \ref{App: bounded gradient lemma}),
\end{enumerate}
where $f:\Omega\mapsto \mathfrak{S}_\mathcal{H}$ is defined in Eqn. \ref{Eqn: model set map} and $Df_u$ defines the first order differential of $f$ evaluated at the point $u$. Then given $u_i\in\Omega$, $\forall u,u'\in\mathfrak{B}_{\epsilon_0}(u_i)$ and $\left\Vert\mathcal{Z}-\mathcal{Z}'\right\Vert\leq\eta$, where $\mathcal{Z}=f(u)$ and $\mathcal{Z}'=f(u')$, we have
\begin{equation}
    \left\Vert \frac{\mathcal{Z}-\mathcal{Z}'}{\left\Vert\mathcal{Z}-\mathcal{Z}'\right\Vert} - Df_{u_i}^T\frac{u-u'}{\left\Vert\mathcal{Z}-\mathcal{Z}'\right\Vert}     \right\Vert \leq C_4\epsilon_0.
\end{equation}
where $C_4\coloneqq C_3(2C_1+C_2)$.
\end{lemma}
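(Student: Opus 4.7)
The plan is to produce an expansion $\mathcal{Z}-\mathcal{Z}' \approx Df_{u_i}^T(u-u')$ with the linearisation anchored at the net point $u_i$, and then divide through by $\|\mathcal{Z}-\mathcal{Z}'\|_F$. The three hypotheses combine naturally: the Taylor lemma produces a quadratic remainder of size $O(\|u-u'\|^2)$, the bounded-curvature lemma converts the cost of shifting the centre of linearisation from $u'$ to $u_i$ into an additional $O(\epsilon_0\|u-u'\|)$ term, and the bounded-gradient lemma finally controls $\|u-u'\|/\|\mathcal{Z}-\mathcal{Z}'\|_F$ by a constant proportional to $C_3$, leaving the advertised linear factor of $\epsilon_0$.

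Concretely, I would proceed in three steps. First, apply Lemma \ref{App: Taylor approx lemma} at $u'$ to obtain $\mathcal{Z}-\mathcal{Z}' = Df_{u'}^T(u-u') + r_1$ with $\|r_1\|_F \leq C_1\|u-u'\|_2^2$; since $u,u'\in\mathfrak{B}_{\epsilon_0}(u_i)$ implies $\|u-u'\|_2 \leq 2\epsilon_0$, this already yields $\|r_1\|_F \leq 2C_1\epsilon_0\|u-u'\|_2$. Second, shift the centre of linearisation from $u'$ to $u_i$ using Lemma \ref{App: bounded curvature lemma}: $\|(Df_{u'}-Df_{u_i})^T(u-u')\|_F \leq C_2\|u'-u_i\|_2\cdot\|u-u'\|_2 \leq C_2 \epsilon_0 \|u-u'\|_2$. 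Adding these contributions by the triangle inequality gives the key intermediate estimate
\begin{equation*}
    \|\mathcal{Z}-\mathcal{Z}' - Df_{u_i}^T(u-u')\|_F \leq (2C_1+C_2)\epsilon_0 \|u-u'\|_2.
\end{equation*}
Third, divide both sides by $\|\mathcal{Z}-\mathcal{Z}'\|_F$ and invoke Lemma \ref{App: bounded gradient lemma} to bound $\|u-u'\|_2/\|\mathcal{Z}-\mathcal{Z}'\|_F$ by $C_3$, producing the claimed inequality with $C_4 = C_3(2C_1+C_2)$.

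The main obstacle is this last step, where the absolute bound in $\|u-u'\|_2$ must be turned into a relative bound in $\|\mathcal{Z}-\mathcal{Z}'\|_F$. The bounded-gradient hypothesis gives $\|Df_{u_i}^T(u-u')\|_F \geq \|u-u'\|_2/C_3$, but to transfer this to $\|\mathcal{Z}-\mathcal{Z}'\|_F$ one has to reuse the very error estimate being proved: the reverse triangle inequality yields $\|\mathcal{Z}-\mathcal{Z}'\|_F \geq \bigl(1/C_3 - (2C_1+C_2)\epsilon_0\bigr)\|u-u'\|_2$, so the argument only closes once $\epsilon_0$ is chosen small enough for the leading term to dominate (e.g.\ $\epsilon_0 \leq 1/(2C_3(2C_1+C_2))$). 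This is a mild smallness requirement that is entirely compatible with the purpose of the lemma, namely refining the $\epsilon$-net used to cover the short-secant set, and once it holds the constant collapses to the stated $C_4$ (up to an absorbable factor of $2$ depending on which convention one adopts for the pseudoinverse bound in Lemma \ref{App: bounded gradient lemma}).
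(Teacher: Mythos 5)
Your proposal is correct and follows essentially the same route as the paper: the same two-term decomposition (Taylor remainder plus the curvature term from shifting the differential to the net point $u_i$), the same intermediate bound $(2C_1+C_2)\epsilon_0\lVert u-u'\rVert$, and the same final division by $\lVert\mathcal{Z}-\mathcal{Z}'\rVert_F$ controlled via the bounded-gradient lemma. Your handling of the last step is in fact more careful than the paper's, which simply asserts that ``bounded (inverse) gradient implies Lipschitzness''; your observation that one must pass through the reverse triangle inequality and impose a mild smallness condition on $\epsilon_0$ to transfer the lower bound from $\lVert Df_{u_i}^T(u-u')\rVert_F$ to $\lVert\mathcal{Z}-\mathcal{Z}'\rVert_F$ is a legitimate refinement of a point the paper glosses over.
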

\begin{proof}
\begin{align*}
    \left\Vert \mathcal{Z}-\mathcal{Z}'-Df_{u_i}^T(u-u')\right\Vert & = \left\Vert f(u)-f(u')-Df_{u}^T(u-u')+\left(Df_{u}-Df_{u_i}\right)^T\left(u-u'\right)\right\Vert\\
    &\leq \left\Vert f(u)-f(u') - Df_{u}^T(u-u')\right\Vert + \left\Vert \left(Df_{u}-Df_{u_i}\right)^T\left(u-u'\right)\right\Vert\\
    &\leq C_1 \left\Vert u-u' \right\Vert^2 +C_2\left\Vert u-u_i \right\Vert \left\Vert u-u' \right\Vert\\
    & = (\star)
\end{align*}
Given that $u,u'\in \mathfrak{B}_{\epsilon_0}(u_i)$, we have that $\left\Vert u-u_i\right\Vert\leq\epsilon_0$ and $\lVert u-u'\rVert\leq 2\epsilon_0$. Therefore 
\begin{align*}
    (\star) &\leq 2C_1\epsilon_0\left\Vert u-u'\right\Vert + C_2\epsilon_0 \left\Vert u -u'\right\Vert\\
    &= (2C_1+C_2)\epsilon_0\left\Vert u-u'\right\Vert.
\end{align*}
Now dividing by $\left\Vert\mathcal{Z}-\mathcal{Z}'\right\Vert$ gives:
\begin{align*}
  \left\Vert \frac{\mathcal{Z}-\mathcal{Z}'}{\left\Vert\mathcal{Z}-\mathcal{Z}'\right\Vert} - Df_{u_i}^T\frac{u-u'}{\left\Vert\mathcal{Z}-\mathcal{Z}'\right\Vert}     \right\Vert & \leq (2C_1+C_2) \frac{\left\Vert u-u' \right\Vert}{\left\Vert \mathcal{Z}-\mathcal{Z}'\right\Vert}\\
 & \leq  C_3 (2C_1+C_2).
\end{align*}
In the last line, we have used the fact that bounded (inverse) gradient implies Lipschitzness. 
\end{proof}

As a result, the set of bounded tangent vectors, defined by
\begin{equation*}
    \mathcal{V}\coloneqq\left\{Df_{u_i}^T\frac{u-u'}{\left\Vert\mathcal{Z}-\mathcal{Z}'\right\Vert} \mid \forall u_i\in\Omega\right\}
\end{equation*} forms a generalized $\epsilon$-cover for $\mathfrak{N}_\eta^c\left(\Bar{\mathfrak{S}}_\mathcal{H}-\Bar{\Bar{\mathfrak{S}}}_\mathcal{H}\right)$ with covering number (see Lemma 4.3 of \cite{clarkson2008tighter}) 
\begin{align*}
    \text{CN}\left(\mathcal{V},\epsilon \right) & \leq C_4 \text{ CN}\left(\bar{\bar{\mathfrak{S}}}_\mathcal{H},\epsilon_0\right) \left( \frac{3}{\epsilon}\right)^{\frac{n(n+1)}{2}}\\
    & \leq C_4 \left( \frac{6}{\epsilon_0}\right)^{n(n+1)}\left( \frac{3}{\epsilon}\right)^{\frac{n(n+1)}{2}}.
\end{align*}

\par From (\ref{Eqn: Normlised Secant Set Decomposition}), we can bound the covering number of the normalized secant set:
\begin{align*}
    \text{CN}\left(\mathfrak{N}\left(\mathfrak{S}_\mathcal{H}-\mathfrak{S}_\mathcal{H}\right),\epsilon\right) & \leq \text{CN}\left(\mathfrak{N}_\eta\left(\Bar{\mathfrak{S}}_\mathcal{H}-\Bar{\mathfrak{S}}_\mathcal{H}\right),\epsilon\right)\;+\; \text{CN}\left(\mathfrak{N}_\eta^c\left(\Bar{\mathfrak{S}}_\mathcal{H}-\Bar{\Bar{\mathfrak{S}}}_\mathcal{H}\right),\epsilon\right)\\
    & \leq \left( \frac{6}{\gamma\epsilon}\right)^{2n(n+1)} \;+\; C_4 \left( \frac{6}{\epsilon_0}\right)^{n(n+1)}\left( \frac{3}{\epsilon}\right)^{\frac{n(n+1)}{2}}\\
     & \leq \left( \frac{6}{\gamma\epsilon}\right)^{2n(n+1)} \;+\; C_4 \left( \frac{6}{\epsilon_0}\right)^{n(n+1)}\left( \frac{3}{\epsilon}\right)^{n(n+1)}\\
     & =  \left( \frac{6}{\gamma\epsilon}\right)^{2n(n+1)} \;+\; C_4 \left( \frac{18}{\epsilon_0\epsilon}\right)^{n(n+1)}\\
     & = (\star).
\end{align*}
Note that by definition $\epsilon_0\leq\eta  \;(=4\gamma)$, therefore $\gamma\geq\frac{\epsilon_0}{4}$. As a result
\begin{align*}
    (\star) & \leq C_4 \left( \left(\frac{24}{\epsilon_0\epsilon}\right)^{2n(n+1)} \;+\; \left(\frac{24}{\epsilon_0\epsilon}\right)^{n(n+1)} \right)\\
   & \leq 2 C_4\left(\frac{C_5}{\epsilon}\right)^{2n(n+1)}.
\end{align*}

\section{Proof of Theorem \ref{Thm: finite sample effects}}
\label{App: Proof of finite sample effects}
\begin{proof}
Let $\mathcal{Z}$ denote the expected cumulant tensor and $\hat{\mathcal{Z}}=\frac{1}{N}\sum^n_{i=1}\z_i^{\otimes^4}$ the cumulant tensor computed from the finite data samples $\{\z_i\}^N_{i=1}$ as in (\ref{eqn: finite comp of tensor}). Note that the feature function $\Phi^\textbf{w}$ in (\ref{Eqn: CICA Feature Function}) can be equivalently written as $\Phi^\textbf{w}(\z)=\mathcal{A}(\z^{\otimes^4})$. Using the RIP result from Theorem \ref{Thm: RIP Result} and by assuming the independent components have bounded support $\lVert\mathcal{S}\rVert_F\leq R$, we have
\begin{align*}
   \lVert\Phi^\textbf{w}(\z)\rVert&=\lVert\mathcal{A}(\z^{\otimes^4})\rVert_F\\ 
   &\leq \sqrt{1+\delta}\lVert\z^{\otimes^4}\rVert_F\\
   &=\sqrt{1+\delta}\lVert\s^{\otimes^4}\rVert_F\\
    &=\sqrt{1+\delta}\lVert\mathcal{S}\rVert_F\\
   &\leq\sqrt{1+\delta}R.
\end{align*}
where the $l_2$ norm is invariant under the orthogonal transformation $\Q$. Next we apply the concentration of averages lemma (lemma 4) of \cite{rahimi2009weighted} to get with probability $1-\xi$ on the drawing of both $\z_i$ and $\mathcal{A}$ that
\begin{equation*}
    \lVert\mathcal{A}(\mathcal{Z})-\mathcal{A}(\hat{\mathcal{Z}})\rVert_2\leq \frac{R\sqrt{2(1+\delta)\log(1/\xi)}}{\sqrt{N}}
\end{equation*}
\end{proof}

\end{document}